\documentclass{article}

\newenvironment{kequation}{$}{$}




\usepackage[final]{neurips_2023}


\usepackage[utf8]{inputenc} 
\usepackage[T1]{fontenc}    
\usepackage{hyperref}       
\usepackage{url}            
\usepackage{booktabs}       
\usepackage{amsfonts}       
\usepackage{nicefrac}       
\usepackage{microtype}      
\usepackage{xcolor}         

\usepackage{amsmath,amsthm,amssymb}
\usepackage[capitalize]{cleveref}
\usepackage{bbm}
\usepackage{graphicx}
\usepackage{caption}
\usepackage{subcaption}
\usepackage[most]{tcolorbox}
\usepackage{multirow}
\usepackage{array}
\usepackage{makecell}
\usepackage[thinc]{esdiff}
\usepackage{algorithm}
\usepackage{algpseudocode}

\crefformat{tcb@cnt@example}{Example~#2#1#3}
\Crefformat{tcb@cnt@example}{Example~#2#1#3}
\newtcbtheorem[auto counter]{example}%
  {Example}{fonttitle=\bfseries\upshape, fontupper=\slshape,
     arc=0mm,center,width=.95\linewidth,colback=black!5,colframe=black!45}{ex}
\AtBeginEnvironment{example}{\footnotesize}

\newtheorem{theorem}{Theorem}
\newtheorem{corollary}{Corollary}
\newtheorem{lemma}{Lemma}

\newtheorem{definition}{Definition}

\crefname{axiom}{Axiom}{Axioms}

\newcommand{\R}{\mathbb{R}}

\newcommand{\Emath}{\mathop{\mathbb{E}}}
\newcommand{\D}{\mathcal{D}}
\newcommand{\OPT}{\max_{i}U_i}
\newcommand{\term}[1]{{u(\kappa)\big(1-F_{#1}(\kappa)\big)}}
\newcommand{\termk}[2]{{u({#2})\big(1-F_{#1}({#2})\big)}}
\newcommand{\termki}[1]{{u(\kappa_{#1})\big(1-F_{#1}(\kappa_{#1})\big)}}

\newcommand{\termhatk}[2]{{u({#2})\big(1-\widehat{F}_{#1}({#2})\big)}}
\newcommand{\termhatki}[1]{{u(\kappa_{#1})\big(1-\widehat{F}_{#1m}(\kappa_{#1})\big)}}
\newcommand{\iopt}{{i^{opt}}}

\newcommand{\minipar}[1]{\medskip\noindent\textbf{{#1.}}~}

\definecolor{kgreen}{rgb}{0,.5,0}
\definecolor{dblue}{rgb}{0,0,.5}


\title{Utilitarian Algorithm Configuration}

%

\author{%
    Devon R.~Graham\\
    Department of Computer Science\\
    University of British Columbia\\
    Vancouver, BC\\
    \texttt{drgraham@cs.cs.ubc.ca} \\
    \And
    Kevin Leyton-Brown\\
    Department of Computer Science\\
    University of British Columbia\\
    Vancouver, BC\\
    \texttt{kevinlb@cs.ubc.ca} \\
    \And
    Tim Roughgarden\\
    Columbia University \& a16z crypto\\
    New York, NY\\
    \texttt{tim.roughgarden@gmail.com} \\
}

\begin{document}

\maketitle

\begin{abstract}
    We present the first nontrivial procedure for configuring heuristic algorithms to maximize the utility provided to their end users while also offering theoretical guarantees about performance. Existing procedures seek configurations that minimize expected runtime. However, very recent theoretical work argues that expected runtime minimization fails to capture algorithm designers' preferences. Here we show that the utilitarian objective also confers significant algorithmic benefits. Intuitively, this is because mean runtime is dominated by extremely long runs even when they are incredibly rare; indeed, even when an algorithm never gives rise to such long runs, configuration procedures that provably minimize mean runtime must perform a huge number of experiments to demonstrate this fact. In contrast, utility is bounded and monotonically decreasing in runtime, allowing for meaningful empirical bounds on a configuration's performance. This paper builds on this idea to describe effective and theoretically sound configuration procedures. We prove upper bounds on the runtime of these procedures that are similar to theoretical lower bounds, while also demonstrating their performance empirically.
\end{abstract}

\section{Introduction}

Heuristic algorithms are surprisingly effective at solving hard computational problems. However, no single set of heuristics works well on all problems;  the design choices must be tuned to a distribution of problem instances in the same way that the parameters of a machine learning model are tuned to a dataset. This black-box optimization problem is called \emph{algorithm configuration}. Roughly speaking, the literature on algorithm configuration is divided into two camps. The older and larger camp designs heuristic procedures that aim to find good configurations as quickly as possible \citep{birattari2002racing,hutter2009paramils,hutter2011sequential,ansotegui2009gender,lopez2016irace}. A second, more recent line of work aims to provably identify approximately optimal configurations and to offer  guarantees about the runtime required to do so \citep{kleinberg2017efficiency,weisz2018leapsandbounds,kleinberg2019procrastinating,weisz2019capsandruns,weisz2020impatientcapsandruns}. Virtually all of the work in both camps has focused on minimizing some version of capped average runtime, although one exception of which we are aware is \citet{tornede2020run2survive}, which explores objectives like higher-order polynomials of runtime that might better capture a user's level of risk-aversion.

Very recently, a theoretical argument has been made that minimizing expected runtime is inconsistent with a plausible set of axioms about algorithm designers' preferences \citep{graham2023formalizing}. Runtime is an economic good and, like other economic goods, the value an individual assigns to it is not necessarily proportional to the value of the good itself---nor should it be. We can characterize a set of bounded and monotonically decreasing utility functions that are implied by certain axioms. These are essentially the von Neumann--Morgenstern axioms \citep{vonneumann1944theory} from classical decision theory, plus two novel runtime-specific axioms that assert a (weak) preference for faster runs over slower ones, and a (strict) preference for algorithm runs that complete over ones that time out. Preferences that follow these axioms are described by a utility function that is monotonically decreasing in runtime from 1 to 0. This utility function can incorporate factors like the benefit an end user attains from completing a run, the cost they pay for cloud computing resources, and any uncertainty they may have about when an answer will stop being useful to them. This utilitarian perspective opens a new and exciting direction for algorithm configuration. 

While it is reassuring to know that optimizing utility instead of runtime is the right thing to do, it also offers significant algorithmic benefits. Because utilities are bounded and monotonically decreasing, exceedingly long runs contribute negligibly to estimates of an algorithm's expected utility. This means that there will always be some captime that allows us to accurately estimate true expected utility from capped samples. We can use tools from the multi-armed bandit literature for best arm identification, and the algorithms we describe will draw on the elimination and racing algorithms described in \citet{mannor2004sample, even2006action} and elsewhere. In contrast, runtime-based configuration procedures like Structured Procrastination \citep{kleinberg2017efficiency}, Structured Procrastination with Confidence \citep{kleinberg2019procrastinating}, LeapsAndBounds \citep{weisz2018leapsandbounds}, CapsAndRuns \citep{weisz2019capsandruns}, and ImpatientCapsAndRuns \citep{weisz2020impatientcapsandruns} need to do many runs at large captimes in order to make theoretical guarantees. Furthermore, in order to  be able to make any provable guarantees at all, they have to introduce an additional parameter specifying how much of the runtime CDF can be ignored; even if an algorithm has always finished very quickly on every instance seen so far, it could always take so long on the next instance that its expected runtime is arbitrarily large. In this setting, no optimality guarantees can be made without broadening the definition of optimality.

Other theoretically-motivated methods such as \citet{gupta2017pac,balcan2017learning,balcan2021much} offer performance guarantees based on different measures of complexity and guarantee notions of PAC optimality akin to those we present here. These papers do not focus on runtime, instead studying traditional sample complexity, with each sample (i.e., algorithm run) contributing the same ``sampling cost.'' In our algorithm configuration setting, the cost of a sample is the amount of (capped) runtime spent acquiring it, with longer captimes potentially leading to more expensive (but also more informative) samples. Because of this difference, these two lines of work propose quite different types of learning algorithms. However, \citet{balcan2021much} do assume the existence of a bounded utility function which measures algorithm performance, and others \citep{hoos2004stochastic} have argued that utility functions of this form are better objectives to optimize than runtime when choosing algorithms. Some methods have been specifically designed to exploit the parallel nature of the algorithm configuration problem; AC-Band \citep{brandt2023ac} is a bandit-based procedure inspired by the Hyperband algorithm \citep{li2017hyperband} that runs multiple configurations simultaneously while ruling out poor-performing ones along the way and offering theoretical guarantees with respect to the set of configurations considered. 

Inspired by the mantra of procrastination that has found success in previous work, and by the benefits that come with the use of utility functions, this paper presents a procedure we dub Utilitarian Procrastination (UP), so-named because it performs as many low-captime runs as possible before proceeding to higher-captime runs. We offer theoretical guarantees about its performance, showing that it will return a good configuration and proving that its worst-case upper bound is similar to the theoretical lower bound that any procedure must require. We also present experimental results showing how this procedure performs in practice and how it compares to a more naive baseline.

\section{Setup}\label{sec:setup}

We assume there is a monotonically decreasing runtime utility function $u : \R_{\ge 0} \to [0,1]$ with $u(0) = 1$ and $\lim_{t \to \infty}u(t) = 0$. The existence of $u$ follows from simple axioms \citep{graham2023formalizing}. The value $u(t)$ describes the (expected) well-being of an individual who uses an algorithm to solve a problem instance (e.g., an integer program). During the configuration process, the goal will be to choose an algorithm that maximizes $u(t)$, in expectation over $t$. 

Given a set of algorithms indexed by $i = 1, ..., n$, our goal is to find an approximately optimal algorithm using capped runtime samples. We assume we have access to a stream of input instances indexed by $j = 1, 2,...$ drawn from some distribution $\D_{J}$. We will use $t_{ij}$ to denote the true uncapped runtime of algorithm $i$ on input $j$, and 
\begin{kequation}
    t_{ij}(\kappa) = \min( t_{ij}, \kappa )
\end{kequation}
to be the $\kappa$-capped runtime of $i$ on $j$. When we do a run, we observe $t_{ij}(\kappa)$, not $t_{ij}$. Of course these coincide for any run that completes. The instance distribution $\D_J$, along with any randomness of the algorithm or execution environment will together induce a runtime distribution for each algorithm $i$. We will use $\D_i$ to denote this runtime distribution, and $F_i$ to denote its CDF. For each algorithm $i$, the true uncapped expected utility is
\begin{align*}
    U_i = \Emath_{t \sim D_i} \big[ u(t) \big] .
\end{align*}
The capped expected utility is
\begin{align*}
    U_i(\kappa) = \Emath_{t \sim D_i} \big[ u\big(\min(t, \kappa)\big) \big].
\end{align*}
And given any capped runtime samples $t_{i1}(\kappa), ..., t_{im}(\kappa)$, the capped empirical average utility is
\begin{align*}
    \widehat{U}_{im}(\kappa) = \frac{1}{m}\sum_{j = 1}^m u\big(t_{ij}(\kappa)\big).
\end{align*}

\begin{definition}[$\epsilon$-optimal]
    An algorithm $i^*$ is called $\epsilon$-optimal if $U_{i^*} \ge \OPT - \epsilon$. 
\end{definition}

Our goal will be to find $\epsilon$-optimal algorithms with probability at least $1 - \delta$, and to do so as quickly as possible.  Our first lemma deterministically bounds an algorithm's uncapped expected utility in terms of its capped expected utility and CDF. 
\begin{lemma}\label{lem:true}
    For any $i$ and $\kappa$ we deterministically have \begin{kequation}
        U_{i}(\kappa)  - \term{i} \le U_{i} \le U_{i}(\kappa)
    \end{kequation} .
\end{lemma}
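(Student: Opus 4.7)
The plan is to prove both inequalities by splitting the expectations defining $U_i$ and $U_i(\kappa)$ according to whether $t \le \kappa$ or $t > \kappa$, and then exploiting the monotonicity of $u$ on each piece. No probabilistic tools are needed: everything reduces to pointwise inequalities on the sample $t$.

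First I would write
\begin{align*}
U_i &= \Emath_{t\sim D_i}\bigl[u(t)\mathbbm{1}\{t\le\kappa\}\bigr] + \Emath_{t\sim D_i}\bigl[u(t)\mathbbm{1}\{t>\kappa\}\bigr], \\
U_i(\kappa) &= \Emath_{t\sim D_i}\bigl[u(t)\mathbbm{1}\{t\le\kappa\}\bigr] + u(\kappa)\bigl(1-F_i(\kappa)\bigr),
\end{align*}
using that $\min(t,\kappa) = t$ when $t\le\kappa$ and $\min(t,\kappa) = \kappa$ otherwise, and that $\Pr[t>\kappa] = 1-F_i(\kappa)$. The upper bound $U_i \le U_i(\kappa)$ then follows from the pointwise inequality $u(t)\mathbbm{1}\{t>\kappa\} \le u(\kappa)\mathbbm{1}\{t>\kappa\}$, which holds because $u$ is monotonically decreasing and thus $u(t)\le u(\kappa)$ whenever $t>\kappa$. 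Taking expectations of this pointwise bound shows that the second term of $U_i$ is at most the second term of $U_i(\kappa)$, giving $U_i \le U_i(\kappa)$.

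For the lower bound, I would subtract the two expressions to get
\begin{kequation}
U_i(\kappa) - U_i = u(\kappa)\bigl(1-F_i(\kappa)\bigr) - \Emath_{t\sim D_i}\bigl[u(t)\mathbbm{1}\{t>\kappa\}\bigr].
\end{kequation}
Since $u$ is nonnegative, the subtracted expectation is nonnegative, and therefore $U_i(\kappa) - U_i \le u(\kappa)(1-F_i(\kappa))$, which rearranges to $U_i \ge U_i(\kappa) - u(\kappa)(1-F_i(\kappa))$. This completes both sides of the claimed sandwich.

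There is no substantive obstacle here; the proof is essentially a bookkeeping exercise, and the only properties of $u$ used are nonnegativity (guaranteed since $u$ maps to $[0,1]$) and monotonic decrease (guaranteed by assumption). The lemma is deterministic precisely because the pointwise bound $u(t)\mathbbm{1}\{t>\kappa\}\le u(\kappa)\mathbbm{1}\{t>\kappa\}$ holds sample-by-sample, so no concentration argument is invoked.
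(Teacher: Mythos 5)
Your proposal is correct and is essentially the paper's own argument (law of total expectation, i.e., splitting on $t\le\kappa$ versus $t>\kappa$, plus monotonicity and nonnegativity of $u$), just written out in full detail where the paper gives only a one-line sketch. No gaps.
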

\begin{proof}
    This follows from the law of total expectation and the fact that $u$ is non-increasing, and so $u(\kappa) \ge u(t)$ for all $t \ge \kappa$.
\end{proof}

Intuitively, \cref{lem:true} is true because the capped expected utility counts runs that cap as having just completed at the captime, when really they would have taken longer if given the chance. This makes the capped expected utility look more favourable than the uncapped expected utility. Our second lemma shows that if we do runs at captime $\kappa$, then we can accurately estimate the algorithm's runtime CDF at $\kappa$.
\begin{lemma}\label{lem:cdf}
    For any $i$, $m$, $\kappa$ and $\delta$, let $\widehat{F}_{im}(\kappa)$ be the fraction of the $m$ runs that $A$ completes within captime $\kappa$. Then 
    \begin{kequation}
        \widehat{F}_{im}(\kappa) - \sqrt{\frac{\ln(1/\delta)}{2m}} \le F_i(\kappa) \le \widehat{F}_{im}(\kappa) + \sqrt{\frac{\ln(1/\delta)}{2m}} 
    \end{kequation}
    with probability at least $1 - 2\delta$. 
\end{lemma}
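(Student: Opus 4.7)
The plan is to recognize that $\widehat{F}_{im}(\kappa)$ is the empirical mean of $m$ i.i.d.\ Bernoulli random variables, and then directly apply Hoeffding's inequality. Specifically, for each run $j = 1, \dots, m$ at captime $\kappa$, define $X_j = \mathbbm{1}\{t_{ij} \le \kappa\}$. Crucially, whether or not a run completes within $\kappa$ is observable from the capped sample $t_{ij}(\kappa)$ alone (it completes iff $t_{ij}(\kappa) < \kappa$, and equals $\kappa$ otherwise), so $\widehat{F}_{im}(\kappa) = \frac{1}{m}\sum_j X_j$ is computable from the observed data. Each $X_j$ is Bernoulli with mean $\Pr[t_{ij} \le \kappa] = F_i(\kappa)$, and the $X_j$'s are independent because the instances are drawn i.i.d.\ from $\mathcal{D}_J$.

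Next, since $X_j \in [0,1]$, Hoeffding's inequality gives, for any $\eta > 0$,
\begin{equation*}
    \Pr\!\left[\widehat{F}_{im}(\kappa) - F_i(\kappa) \ge \eta\right] \le \exp(-2m\eta^2), \quad \Pr\!\left[F_i(\kappa) - \widehat{F}_{im}(\kappa) \ge \eta\right] \le \exp(-2m\eta^2).
\end{equation*}
Setting $\eta = \sqrt{\ln(1/\delta)/(2m)}$ makes each of the two tail probabilities at most $\delta$, and a union bound yields a joint failure probability of at most $2\delta$. On the complementary event, both inequalities in the lemma statement hold simultaneously, giving the desired two-sided bound with probability at least $1 - 2\delta$.

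There is no real obstacle here beyond checking that the indicator random variable is actually observable from a capped run (so that $\widehat{F}_{im}(\kappa)$ is well-defined as written) and that independence across instances is inherited from $\mathcal{D}_J$; both are immediate from the setup in \cref{sec:setup}. The only minor choice is whether to phrase the bound as a single two-sided Hoeffding application or as a union bound over two one-sided applications, but both lead to exactly the stated constant $\sqrt{\ln(1/\delta)/(2m)}$ and failure probability $2\delta$.
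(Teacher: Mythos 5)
Your proof is correct and follows essentially the same route as the paper's: define the per-run completion indicators $X_j$, observe they are i.i.d.\ Bernoulli with mean $F_i(\kappa)$, and apply Hoeffding's inequality with a union bound over the two tails to get the stated radius and failure probability $2\delta$. The additional remarks about observability of the indicator from capped samples and independence inherited from $\mathcal{D}_J$ are correct and just make explicit what the paper leaves implicit.
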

\begin{proof}
    For each $j$, let $X_j = 1$ if $t_{ij} < \kappa$ and $X_j = 0$ otherwise. The proof then follows from a straightforward application of Hoeffding's inequality.
\end{proof}

\cref{lem:cdf} simply says that if we take enough samples at captime $\kappa$, the fraction of those runs that complete will be close to the true likelihood of a run completing before $\kappa$. Our third lemma shows that expected capped utility can be estimated accurately using capped runtime samples. 
\begin{lemma}\label{lem:capped}
    For any $i$, $m$, $\kappa$ and $\delta$ we have
    \begin{kequation}
        \widehat{U}_{im}(\kappa) - \big(1 - u(\kappa)\big) \sqrt{\frac{\ln(1/\delta)}{2m}} \le U_i(\kappa) \le \widehat{U}_{im}(\kappa) + \big(1 - u(\kappa)\big) \sqrt{\frac{\ln(1/\delta)}{2m}}
    \end{kequation}
    with probability at least $1 - 2\delta$. 
\end{lemma}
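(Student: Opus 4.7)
The plan is to recognize that $\widehat{U}_{im}(\kappa)$ is the empirical mean of $m$ i.i.d.\ samples $Y_j := u\bigl(t_{ij}(\kappa)\bigr) = u\bigl(\min(t_{ij},\kappa)\bigr)$, whose expectation is precisely $U_i(\kappa)$ by definition. So the lemma is just a Hoeffding-type concentration bound, and the only real content is identifying the correct range for the $Y_j$.

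First I would bound the support of $Y_j$. Since $\min(t_{ij},\kappa) \in [0,\kappa]$ and $u$ is monotonically non-increasing with $u(0)=1$, we have $u(\kappa) \le Y_j \le u(0) = 1$. Thus each $Y_j$ lies in an interval of width $1 - u(\kappa)$. This is the step that pins down the scaling factor in the bound, and it is also the conceptual payoff of using a bounded utility: the deviation term shrinks as $\kappa$ grows and $u(\kappa) \to 0$, so for large enough captimes the concentration is essentially as tight as for a Bernoulli sample.

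Next I would apply Hoeffding's inequality to the $Y_j$: for any $\epsilon > 0$,
\[
\Pr\Bigl(\bigl|\widehat{U}_{im}(\kappa) - U_i(\kappa)\bigr| \ge \epsilon\Bigr) \le 2\exp\!\left(-\frac{2m\epsilon^2}{(1-u(\kappa))^2}\right).
\]
Setting the right-hand side equal to $2\delta$ and solving for $\epsilon$ yields $\epsilon = (1-u(\kappa))\sqrt{\ln(1/\delta)/(2m)}$, which gives exactly the two-sided bound in the statement with probability at least $1-2\delta$.

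There is no real obstacle here; the argument parallels the proof of \cref{lem:cdf} and differs only in the range of the random variable being averaged (interval of width $1-u(\kappa)$ instead of width $1$). The one subtlety worth flagging is that the bound is vacuous only in the degenerate case $u(\kappa)=0$ is replaced by any slack, and that it correctly tightens as the captime grows, which is the property subsequent arguments will exploit.
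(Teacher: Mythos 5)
Your proof is correct and follows exactly the paper's approach: identify that each $u\bigl(t_{ij}(\kappa)\bigr)$ lies in $[u(\kappa),1]$, an interval of width $1-u(\kappa)$, and apply Hoeffding's inequality to the empirical mean. The paper's own proof is a one-line version of precisely this argument.
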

\begin{proof}
    Since $u\big( t_{ij}(\kappa) \big) \in [u(\kappa), 1]$ for all $j$, the proof follows immediately from Hoeffding's inequality. 
\end{proof}

\cref{lem:capped} simply says that if we take enough samples, then the empirical capped average utility will be close to the true expected capped utility. Together, these lemmas imply empirical confidence bounds on the true expected utility. Define the upper and lower confidence bounds 
\begin{align*}
    UCB_{im}(\kappa) &= \widehat{U}_{im}(\kappa) + \big(1 - u(\kappa)\big) \sqrt{\frac{\ln(4n/\delta)}{2m}} \\
    LCB_{im}(\kappa) &= \widehat{U}_{im}(\kappa) - \sqrt{\frac{\ln(4n/\delta)}{2m}} - u(\kappa)(1 - \widehat{F}_{im}(\kappa)) .
\end{align*}

The next lemma shows that these are good confidence bounds if both the captime $\kappa$ and the number of samples $m$ are sufficiently large. With probability at least $1 - \delta$, each algorithm's true expected utility will fall within these confidence bounds, and the width of each confidence range will not be too large.

\begin{lemma}\label{lem:confbound}
    If we do $m$ runs of each algorithm at captime $\kappa$, then with probability at least $1 - \delta$ we will have 
    \begin{align*}
        LCB_{im} \le U_{i} \le UCB_{im}
    \end{align*}
    and 
    \begin{align*}
        UCB_{im} - LCB_{im} \le 2 \sqrt{\frac{\ln(4n/\delta)}{2m}} + \term{i}  
    \end{align*}
    for all $i$ simultaneously. 
\end{lemma}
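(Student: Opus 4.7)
The plan is to combine the three earlier lemmas using a union bound, applied at a rescaled confidence level to cover all $n$ algorithms and both the utility and CDF estimates simultaneously. Specifically, I would apply \cref{lem:cdf} and \cref{lem:capped} with confidence parameter $\delta' = \delta/(4n)$, so that each of their two-sided guarantees (each of which fails with probability at most $2\delta'$) holds for algorithm $i$ with probability at least $1 - \delta/(2n)$. A union bound over the two lemmas and over all $n$ algorithms then gives a total failure probability of at most $\delta$, and the resulting slack $\sqrt{\ln(1/\delta')/(2m)}$ equals exactly $\sqrt{\ln(4n/\delta)/(2m)}$, matching the expression in $UCB_{im}$ and $LCB_{im}$.

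Conditional on the good event, the upper bound on $U_i$ is straightforward: by \cref{lem:true}, $U_i \le U_i(\kappa)$, and by \cref{lem:capped}, $U_i(\kappa) \le \widehat{U}_{im}(\kappa) + (1-u(\kappa))\sqrt{\ln(4n/\delta)/(2m)} = UCB_{im}(\kappa)$. The lower bound is the slightly more delicate step, and I expect it to be the main obstacle because one has to juggle the capped-versus-uncapped utility slack and the CDF estimation slack and convert $F_i(\kappa)$ into its empirical counterpart $\widehat{F}_{im}(\kappa)$. Using \cref{lem:true} we have $U_i \ge U_i(\kappa) - u(\kappa)(1-F_i(\kappa))$; then \cref{lem:capped} replaces $U_i(\kappa)$ with $\widehat{U}_{im}(\kappa) - (1-u(\kappa))\sqrt{\ln(4n/\delta)/(2m)}$, and \cref{lem:cdf} replaces $1 - F_i(\kappa)$ with the upper bound $1 - \widehat{F}_{im}(\kappa) + \sqrt{\ln(4n/\delta)/(2m)}$. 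The key algebraic observation is that the coefficients of the two $\sqrt{\ln(4n/\delta)/(2m)}$ terms combine as $(1-u(\kappa)) + u(\kappa) = 1$, so the two sources of error collapse into a single $\sqrt{\ln(4n/\delta)/(2m)}$ term, yielding exactly $LCB_{im}(\kappa)$.

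For the width bound, I would directly subtract the two expressions to get
\begin{align*}
UCB_{im} - LCB_{im} = (2 - u(\kappa))\sqrt{\tfrac{\ln(4n/\delta)}{2m}} + u(\kappa)\bigl(1 - \widehat{F}_{im}(\kappa)\bigr),
\end{align*}
and then use $2 - u(\kappa) \le 2$ together with \cref{lem:cdf} (in the form $\widehat{F}_{im}(\kappa) \ge F_i(\kappa) - \sqrt{\ln(4n/\delta)/(2m)}$) to show $u(\kappa)(1 - \widehat{F}_{im}(\kappa)) \le u(\kappa)(1 - F_i(\kappa)) + u(\kappa)\sqrt{\ln(4n/\delta)/(2m)}$. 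Adding these yields the stated $2\sqrt{\ln(4n/\delta)/(2m)} + u(\kappa)(1-F_i(\kappa))$ bound. All of this happens under the same good event, so the whole lemma holds simultaneously for all $i$ with probability at least $1-\delta$.
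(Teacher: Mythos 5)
Your proposal is correct and follows essentially the same route as the paper's own proof: a union bound over the CDF and capped-utility estimates at level $\delta/(4n)$, the chain through \cref{lem:true}, \cref{lem:cdf}, and \cref{lem:capped} in which the $(1-u(\kappa))$ and $u(\kappa)$ error coefficients sum to $1$, and the same direct subtraction for the width bound. No gaps.
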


See \cref{sec:proofs} for a full proof. The idea is that the definition of the bounds together with \cref{lem:cdf,lem:capped} mean the confidence bounds hold and are accurate. 

We can see in \cref{lem:confbound} that the error in our estimate of an algorithm's expected utility comes from two sources: sampling and capping. The term $2\sqrt{\frac{\ln(4n/\delta)}{2m}}$ represents the error due to sampling, while the term $\term{i}$ represents the error due to capping. To make good guarantees, configuration procedures will need to ensure that both of these terms are sufficiently small.

\section{Configuration Procedures}

We first describe two hypothetical procedures that give us lower bounds on the number of samples and the captime that any configuration procedure will need to use. The bound on the number of samples (Section~\ref{ss:samples}) is a classic result. In Section~\ref{sec:verification}, we use a novel ``prover-skeptic'' argument to show the lower bound on captime. We then describe in Section~\ref{ss:naive} a simple usable procedure that returns an $\epsilon$-optimal algorithm with probability at least $1 - \delta$, but which suffers from two major drawbacks. First, it requires that we specify an accuracy parameter $\epsilon$ and a captime $\kappa$ as input ahead of time. Making poor choices for these parameters can have a large impact on total configuration time (see \cref{sec:experiments_naive_captime} for an illustration). Indeed, many choices of $\epsilon$, and $\kappa$ are mutually incompatible, giving rise to meaningless bounds. To avoid this, we hope to design procedures that are \emph{anytime}: they continue to improve the accuracy guarantees they can make as they spend more time running. By taking gradually more and more samples, and by gradually taking them at higher and higher captimes, an anytime procedure continues to shrink the $\epsilon$ that it can guarantee, rather than trying to target a fixed $\epsilon$ it is given as input. Second, the naive procedure is not input-adaptive in any way. It does the same number of samples at the same captime for every algorithm. Both of these drawbacks are fixed by our UP procedure in Section~\ref{ss:up}. UP is an anytime procedure, meaning it requires neither an $\epsilon$ nor a $\kappa$ as input, but instead gradually reduces the $\epsilon$ it can guarantee by increasing both the number of samples and the captime. 

\subsection{Hypothetical Runtime Oracle Procedure}\label{ss:samples}

\begin{algorithm}[t]
\begin{algorithmic}
    \caption{Runtime Oracle Procedure}\label{alg:oracle}
    \State \textbf{Inputs:} Algorithms $i = 1, ..., n$; stream of instance $j=1, 2,...$; utility function $u$; parameter $\delta$; runtime oracle $\mathcal{R}$. 
    \State $I \gets \{1, ..., n\}$ \Comment{candidate algorithms}
    \For{$m = 1, 2, 3, ....$}
        \For{$i \in I$}
            \State $t_{im} \gets$ obtain $i$'s runtime from $\mathcal{R}$ on $m$-th instance
            \State $\widehat{U}_{im} \gets \frac{1}{m} \sum_{j=1}^{m} u\big( t_{ij} \big)$
        \EndFor
        \State $i^* \gets \arg\max_{i \in I} \widehat{U}_{im}$
        \State $\alpha_m \gets \sqrt{\frac{\ln(4 n m^2 / \delta)}{2m}}$
        \For{$i \not= i^*$}
            \If{$\widehat{U}_{im} < \widehat{U}_{i^*m} - 2\alpha_m$} \Comment{$i$ is suboptimal}
                \State $I \gets I \setminus \{i\}$ \Comment{remove $i$}
            \EndIf
        \EndFor
        \If{$|I| = 1$ \textbf{or} execution is interrupted}
            \State \textbf{return} $i^*$
        \EndIf
    \EndFor
\end{algorithmic}
\end{algorithm}

The unique characteristic of our setting is the fact that we observe capped rather than true runtime samples, and that the cost we pay for each sample is equal to the time we spend collecting it. If this were not the case, then our problem would already be solved. If we had some oracle that we could simply query for the true runtime of an algorithm run on a given instance, then all we would need to do is collect sufficiently many samples from each algorithm. In this case, the optimal procedure simply takes an increasing number of samples of each algorithm and rules out each suboptimal one was soon as it can. The Runtime Oracle Procedure (\cref{alg:oracle}) is an instantiation of the Successive Elimination algorithm of \citet{even2006action} (their Algorithm 3). With probability at least $1 - \delta$ it will eventually eliminate all algorithms except the optimal one. 

\begin{theorem}\label{thm:oracle}
    With probability at least $1 - \delta$ the Runtime Oracle procedure will eventually return the optimal algorithm, and it will return an $\epsilon$-optimal algorithm if it is run until $m$ is large enough that
    \begin{kequation}
        2 \sqrt{\frac{\ln(4nm^2/\delta)}{2m}} \le \epsilon .
    \end{kequation}
    At that point it will have taken $m$ uncapped samples of each $\epsilon$-optimal algorithm, and $m_i \le m$ uncapped samples of each $\epsilon$-suboptimal algorithm $i$, where 
    \begin{kequation}
        2 \sqrt{\frac{\ln(4nm_i^2/\delta)}{2m_i}} \le \Delta_i,
    \end{kequation}
    and where $\Delta_i$ is the difference between the expected utility of an optimal algorithm and of algorithm~$i$.
\end{theorem}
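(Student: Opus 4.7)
The plan is to view \cref{alg:oracle} as the classical Successive Elimination procedure of \citet{even2006action} instantiated with the bounded reward $u(t_{ij})\in[0,1]$, and then transfer the standard analysis. The key simplification over the rest of the paper is that the oracle returns true uncapped runtimes, so Hoeffding's inequality applies to $\widehat{U}_{im}$ directly---there is no capping-error term $u(\kappa)(1-F_i(\kappa))$ to account for.

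The first step is a union bound. Define the good event $\mathcal{G}=\{|\widehat{U}_{im}-U_i|\le\alpha_m \text{ for every } i \text{ and every } m\ge 1\}$. For a fixed $(i,m)$, the two-sided Hoeffding inequality gives failure probability at most $2\exp(-2m\alpha_m^2)=\delta/(2nm^2)$, thanks to the $4nm^2$ factor inside the logarithm defining $\alpha_m$. Summing over the $n$ arms and all $m\ge 1$ gives total failure probability at most $(\delta/2)\sum_{m=1}^{\infty} m^{-2}=\delta\pi^2/12<\delta$, so $\Pr[\mathcal{G}]\ge 1-\delta$.

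Everything else is deterministic reasoning on $\mathcal{G}$. \emph{(i)}~An optimal algorithm $\iopt$ is never eliminated: letting $i^*$ denote the current empirical argmax, $\widehat{U}_{i^*m}\le U_{i^*}+\alpha_m\le U_{\iopt}+\alpha_m\le\widehat{U}_{\iopt m}+2\alpha_m$, so the rule $\widehat{U}_{\iopt m}<\widehat{U}_{i^*m}-2\alpha_m$ never fires, and in the limit only $\iopt$ can survive. \emph{(ii)}~A suboptimal $i$ with gap $\Delta_i=U_{\iopt}-U_i$ is eliminated as soon as $\alpha_m$ is small enough: since $\iopt$ is still active, $\widehat{U}_{i^*m}-\widehat{U}_{im}\ge\widehat{U}_{\iopt m}-\widehat{U}_{im}\ge\Delta_i-2\alpha_m$, which exceeds $2\alpha_m$ once $\Delta_i>4\alpha_m$, matching the claimed sample-count condition up to the leading constant. \emph{(iii)}~For the $\epsilon$-optimality claim, at any round $m$ with $2\alpha_m\le\epsilon$ the returned argmax $i^*$ satisfies $U_{i^*}\ge\widehat{U}_{i^*m}-\alpha_m\ge\widehat{U}_{\iopt m}-\alpha_m\ge U_{\iopt}-2\alpha_m\ge\OPT-\epsilon$, so terminating or interrupting at that round is safe.

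The only step that requires real care is the union-bound bookkeeping: lining up the two-sided Hoeffding constant, the per-$(i,m)$ failure allowance, and the $\sum_m m^{-2}$ tail so that everything collapses to a single failure budget of $\delta$ while leaving $\alpha_m$ in exactly the form used by the algorithm. The rest is a short chain of inequalities on $\mathcal{G}$.
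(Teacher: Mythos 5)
Your proof is correct and is essentially the argument the paper relies on: the paper gives no proof of \cref{thm:oracle} itself, deferring entirely to \citet{even2006action} (Theorem 8 and Remark 9), and what you have written is precisely the standard Successive Elimination analysis that citation points to --- a union bound over arms and rounds exploiting the $4nm^2$ factor inside $\alpha_m$, then deterministic reasoning on the good event. Your bookkeeping in the union bound, the argument that $\iopt$ is never eliminated, and the $\epsilon$-optimality chain in step (iii) all check out and match the theorem's first displayed condition exactly. The one point of divergence is the gap-dependent count: your step (ii) establishes elimination of arm $i$ once $4\alpha_m < \Delta_i$, whereas the theorem asserts the condition $2\alpha_{m_i} \le \Delta_i$; with the good event $|\widehat{U}_{im}-U_i|\le\alpha_m$ and the elimination rule $\widehat{U}_{im}<\widehat{U}_{i^*m}-2\alpha_m$, the factor of $4$ is what the argument genuinely yields (and, as a side effect, the claim $m_i\le m$ does not quite follow for arms with $\Delta_i$ only slightly larger than $\epsilon$). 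You flag this honestly as constant-factor slack; it reflects looseness in the theorem's stated constant (Even-Dar et al.\ give the corresponding bound only up to a big-$O$) rather than an error in your reasoning.
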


See \citet{even2006action}, Theorem 8 and Remark 9 for proofs and details. The Runtime Oracle procedure has two desirable qualities that we would like to preserve in the configuration procedures we design. Firstly, we do not need to supply the parameter $\epsilon$ ahead of time. Instead, the procedure maintains an internal $\epsilon$ that it can guarantee, continually shrinking this toward 0. Secondly, the number of samples needed to eliminate $\epsilon$-suboptimal algorithms grows with the square of the suboptimality gap $\Delta_i = \max_{i'}U_{i'} - U_i$ rather than the square of $\epsilon$. \cref{thm:oracle} gives an input-dependent guarantee for the Runtime Oracle procedure that we would like to preserve in the usable procedures we design. If some algorithm $i$ is very suboptimal, with $\Delta_i \gg \epsilon$, then we will be able to eliminate it with many fewer samples than an algorithm that is almost $\epsilon$-optimal. 

The Runtime Oracle procedure gives us a baseline against which to compare. Its sample complexity guarantee is essentially optimal in the following data-dependent sense.

\begin{theorem}[Theorem 5 of \citet{mannor2004sample}]
    There exists some set of input distributions for which every oracle procedure that returns an $\epsilon$-optimal algorithm with probability at least $1-\delta$ must take $\Omega\big(\frac{{n - |N(\epsilon)|}}{\epsilon^2} \log\frac{1}{\delta} + \sum_{i \in N(\epsilon)} \frac{1}{\Delta_i^2}\log\frac{1}{\delta} \big)$ samples, where $N(\epsilon)$ is the set of $\epsilon$-suboptimal algorithms. 
\end{theorem}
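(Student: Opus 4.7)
The plan is a standard information-theoretic lower bound via a change-of-measure argument, applied separately to the near-optimal and the suboptimal arms and then summed. I would work with Bernoulli-style runtime distributions pushed through $u$, so that each arm's observed utility is a bounded random variable whose KL divergence under a mean shift of $\gamma$ is $O(\gamma^2)$. Fix a reference instance $\nu$ in which a single best arm has expected utility $U^*$, each arm $i \in N(\epsilon)$ has suboptimality gap $\Delta_i$, and the remaining $n-|N(\epsilon)|$ near-optimal arms sit within $\epsilon$ of the optimum.

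The workhorse is the Kaufmann--Capp\'e--Garivier change-of-measure inequality: if $\nu$ and $\nu'$ differ only on arm $i$ and $\mathcal{E}$ is any event measurable with respect to the history of a procedure that stops at a (random) time $\tau$, then
\[
\Emath_\nu[T_i(\tau)]\cdot \mathrm{KL}(\nu_i,\nu'_i) \;\ge\; \mathrm{kl}\bigl(\Pr\nolimits_\nu[\mathcal{E}],\,\Pr\nolimits_{\nu'}[\mathcal{E}]\bigr),
\]
where $T_i(\tau)$ counts samples of arm $i$ and $\mathrm{kl}(1-\delta,\delta) = \Omega(\log(1/\delta))$. For each $i\in N(\epsilon)$ I would build $\nu'^{(i)}$ by raising arm $i$'s mean by $2\Delta_i$ so that $i$ becomes the unique optimum under $\nu'^{(i)}$. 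Taking $\mathcal{E}_i = \{\text{procedure returns } i\}$, the PAC guarantee gives $\Pr_\nu[\mathcal{E}_i]\le\delta$ and $\Pr_{\nu'^{(i)}}[\mathcal{E}_i]\ge 1-\delta$, so the inequality yields $\Emath_\nu[T_i]=\Omega(\Delta_i^{-2}\log(1/\delta))$; summing over $i \in N(\epsilon)$ delivers the second term. For each of the $n-|N(\epsilon)|$ near-optimal arms, the analogous argument with shift $2\epsilon$, using an alternative that turns that particular arm into a uniquely $\epsilon$-suboptimal one, gives $\Omega(\epsilon^{-2}\log(1/\delta))$ pulls per such arm, producing the first term.

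The main obstacle is the combinatorial bookkeeping: one has to design a single reference instance $\nu$ together with a family of $n$ alternatives so that in each alternative the PAC guarantee is violated unless the corresponding arm is sampled enough, while each alternative remains a valid utility-transformed instance (utilities in $[0,1]$, monotone non-increasing in runtime, realizable as the image of a runtime CDF under $u$). A secondary subtlety is that the $T_i$ lower bound is on an expectation, so a Markov-style conversion is needed to turn the per-arm expected-sample bounds into a high-probability lower bound on the total number of samples. Neither difficulty is fundamental; both are handled as in the proof of Theorem~5 of Mannor--Tsitsiklis, and the combination of the two terms then gives the claimed $\Omega\!\left(\tfrac{n - |N(\epsilon)|}{\epsilon^2}\log\tfrac{1}{\delta} + \sum_{i \in N(\epsilon)} \tfrac{1}{\Delta_i^2}\log\tfrac{1}{\delta}\right)$ bound.
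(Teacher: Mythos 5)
The paper does not actually prove this statement: it is imported verbatim as Theorem~5 of \citet{mannor2004sample}, so there is no in-paper argument to compare yours against. Your sketch is the standard route to such bounds, and it is essentially a modernized retelling of the Mannor--Tsitsiklis proof: where you invoke the Kaufmann--Capp\'e--Garivier change-of-measure inequality (which postdates the 2004 paper), they run the same idea by hand via a likelihood-ratio argument over the stopped sample path. The treatment of the $\sum_{i \in N(\epsilon)} \Delta_i^{-2}\log(1/\delta)$ term is fine: raising arm $i$'s mean by order $\Delta_i$ makes it the unique $\epsilon$-optimal arm, the PAC guarantee pins $\Pr_\nu[\mathcal{E}_i] \le \delta$ and $\Pr_{\nu'}[\mathcal{E}_i] \ge 1-\delta$, and the $O(\Delta_i^2)$ KL for Bernoulli-style arms gives the claim.

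The one genuine soft spot is the first term. Your proposed alternative for a near-optimal arm $j$ --- ``turn that particular arm into a uniquely $\epsilon$-suboptimal one'' --- runs the argument in the wrong direction: under $\nu'$ the procedure must \emph{avoid} $j$, so you need $\Pr_\nu[\text{return } j]$ to be bounded \emph{below} by a constant to get $\mathrm{kl}(\cdot,\cdot) = \Omega(\log(1/\delta))$, and nothing guarantees that for an arbitrary near-optimal $j$ (the procedure may simply never return $j$ under $\nu$ even though it is allowed to). The fix, which is where the real combinatorial content of Mannor--Tsitsiklis lives, is an averaging argument: since $\sum_j \Pr_\nu[\text{return } j] \le 1$, a constant fraction of the near-optimal arms are returned with probability at most $O(1/(n-|N(\epsilon)|))$ under $\nu$, and for each such $j$ one uses the alternative that makes $j$ the unique optimum (shift by order $\epsilon$), so that $\Pr_{\nu'}[\text{return } j] \ge 1-\delta$ while $\Pr_\nu[\text{return } j]$ is small; summing over this constant fraction of arms yields the $\frac{n-|N(\epsilon)|}{\epsilon^2}\log\frac{1}{\delta}$ term. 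Also note the bound is naturally on the expected total number of samples, so your ``Markov-style conversion'' is not needed for the statement as given. With the near-optimal case repaired as above, your plan matches the known proof.
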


\subsection{Hypothetical Captime Verification Procedure}\label{sec:verification}

\begin{algorithm}[t]
\begin{algorithmic}
    \caption{Captime Verification Procedure}\label{alg:captime-verification}
    \State \textbf{Inputs:} Algorithms $i = 1, ..., n$; stream of instances $j=1, 2,...$; utility function $u$; parameter $\epsilon$; runtime CDFs $F_1, ..., F_n$. 
    \For{$i = 1, ..., n$}
        \State $U_i \gets \int_0^{\infty} u(t) dF_i(t)$ \Comment{expected utility}
    \EndFor
    \State $\iopt \gets \arg\max_{i}U_i$ \Comment{optimal algorithm}
    \For{$i = 1, ..., n$}
        \State $\Delta_i \gets U_{\iopt} - U_i$
        \State $\kappa_i \gets \inf\big\{ \kappa \;:\; \term{i} \le \Delta_i + \frac{\epsilon}{2} \big\} $
    \EndFor
    \State $i^* = \arg\max_i LB_i$
    \If{$LB_{i^*} \ge UB_i - \epsilon$ \textbf{for} $i = 1, ..., n$} \Comment{skeptic's check}
        \State \textbf{return} $i^*$
    \Else
        \State \textbf{return} ``failed''
    \EndIf
\end{algorithmic}
\end{algorithm}

We now consider a lower bound on the captime we will be required to use, even in a world where we do not need to take samples. We imagine there are a \emph{prover} and a \emph{skeptic}. The prover has access to the runtime CDF of each algorithm $i$. The skeptic will get to see each runtime CDF only up to some $\kappa_i$ that the prover will choose. The prover then recommends an algorithm $i^*$ and the skeptic should be convinced that $i^*$ is $\epsilon$-optimal, based only on the truncated CDFs. As the $\kappa_i$'s tend to infinity, the skeptic will see more and more of the true CDFs and therefore eventually be convinced of the prover's claim (assuming it is true). The goal is to convince the skeptic that $i^*$ is $\epsilon$-optimal using $\kappa_i$'s that are as small as possible. The skeptic will be able to compute the following deterministic bounds on an algorithm's expected utility 
\begin{align*}
    UB_i &= \int_{0}^{\kappa_i} u(t) dF_i(t) + \termki{i} \\
    LB_i &= \int_{0}^{\kappa_i} u(t) dF_i(t) .
\end{align*}

The skeptic knows that $LB_i \le U_i \le UB_i$, but also that the runtime distributions could be such as to make $U_i$ fall anywhere in this range. So whatever algorithm $i^*$ the prover returns to the skeptic, the skeptic will be convinced that $i^*$ is $\epsilon$-optimal if, and only if, they observe $LB_{i^*} \ge UB_{i} - \epsilon$ for all $i \not= i^*$. The following lemma justifies this condition.

\begin{lemma}\label{lem:skeptics}
    If $LB_{i^*} \ge UB_{i} - \epsilon$ for all $i \not= i^*$, then $i^*$ is $\epsilon$-optimal. If there exists an $i \not= i^*$ with $LB_{i^*} < UB_i - \epsilon$, then regardless of the values of the CDFs $F_i$ up to the captimes $\kappa_i$, there are some input distributions for which $i^*$ is not $\epsilon$-optimal.
\end{lemma}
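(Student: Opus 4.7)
The forward (soundness) direction is a short deterministic chain. Observe that $UB_i$ is exactly the capped expected utility $U_i(\kappa_i) = \int_0^{\kappa_i} u(t)\,dF_i(t) + \termki{i}$ and $LB_i = UB_i - \termki{i}$, so Lemma~\ref{lem:true} directly gives $LB_i \le U_i \le UB_i$. The skeptic's hypothesis then yields $U_{i^*} \ge LB_{i^*} \ge UB_i - \epsilon \ge U_i - \epsilon$ for every $i \ne i^*$ (and trivially $U_{i^*} \ge U_{i^*} - \epsilon$), which is the definition of $\epsilon$-optimality.

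For the reverse direction, suppose some $i$ satisfies $LB_{i^*} < UB_i - \epsilon$, and let $\gamma := UB_i - \epsilon - LB_{i^*} > 0$. The plan is to construct a pair of runtime distributions $F_{i^*}$ and $F_i$ that agree with the CDFs the skeptic has already observed on $[0,\kappa_{i^*}]$ and $[0,\kappa_i]$ respectively, but that satisfy $U_{i^*} \le LB_{i^*} + \gamma/3$ and $U_i \ge UB_i - \gamma/3$; these together give $U_i - U_{i^*} > \epsilon + \gamma/3 > \epsilon$, witnessing that $i^*$ is not $\epsilon$-optimal. For $F_{i^*}$ I would retain the CDF on $[0,\kappa_{i^*}]$ as observed and concentrate the remaining mass $1 - F_{i^*}(\kappa_{i^*})$ at a single large point $T$ chosen so that $u(T) < \gamma/3$, which exists because $u(t) \to 0$. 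For $F_i$ I would retain the CDF on $[0,\kappa_i]$ as observed and concentrate the remaining mass at $\kappa_i + \eta$ for a small $\eta > 0$; as $\eta \downarrow 0$ the tail contribution $u(\kappa_i+\eta)(1 - F_i(\kappa_i))$ tends to $\termki{i}$, driving $U_i$ toward $UB_i$.

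The one subtlety, and the only place the argument is not entirely mechanical, is certifying that $U_i$ really can be pushed within $\gamma/3$ of $UB_i$ given only that $u$ is monotonically decreasing (not assumed continuous). Because monotone functions have at most countably many discontinuities, either $u$ is right-continuous at $\kappa_i$ and the construction above works verbatim, or a right-jump at $\kappa_i$ is present, in which case one can either interpret $UB_i$ as the supremum of $U_i$ across consistent extensions or perturb $\kappa_i$ infinitesimally below the jump point; neither change affects the skeptic's observations. The remainder is bookkeeping: check that $F_{i^*}$ and $F_i$ are valid CDFs extending the observed portions, and combine the two approximations via $U_i - U_{i^*} > (UB_i - \gamma/3) - (LB_{i^*} + \gamma/3) > \epsilon$.
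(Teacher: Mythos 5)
Your proof is correct and follows essentially the same route as the paper's: the forward direction is the identical chain $U_{i^*}\ge LB_{i^*}\ge UB_i-\epsilon\ge U_i-\epsilon$ via \cref{lem:true}, and the reverse direction is the same extremal-extension construction (send the unobserved mass of $F_{i^*}$ to a point of negligible utility to drive $U_{i^*}$ down to $LB_{i^*}$, and place the unobserved mass of $F_i$ just past $\kappa_i$ to drive $U_i$ up to $UB_i$). The one substantive difference is your margin bookkeeping, and it works in your favour: you budget $\gamma/3$ per side with $\gamma=UB_i-\epsilon-LB_{i^*}>0$, which yields $U_i-U_{i^*}\ge \epsilon+\gamma/3>\epsilon$ and hence a genuine witness that $i^*$ is not $\epsilon$-optimal. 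The paper instead budgets $\alpha/4$ per side with $\alpha=2(UB_i-LB_{i^*})$, which only delivers $U_i\ge U_{i^*}$; its concluding line ``$U_i>U_{i^*}-\epsilon$, so $i^*$ is not $\epsilon$-optimal'' does not follow, since non-$\epsilon$-optimality requires $U_i>U_{i^*}+\epsilon$. Your calibration is the correct repair of that slip. The right-continuity caveat you flag for $u$ at $\kappa_i$ is real but equally present in the paper's construction (hidden in its use of $u^{-1}$ and the claim $u(A)\big(1-F_i(\kappa_i)\big)\ge \termki{i}-\frac{\alpha}{4}$), so it is not a gap specific to your argument; a fully rigorous statement would either assume $u$ continuous or define $UB_i$ using $\lim_{t\downarrow\kappa_i}u(t)$ in place of $u(\kappa_i)$.
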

The proof works by constructing counter-example CDFs, but takes the CDFs as given up $\kappa_i$, so does not rely on specifying any information available to the skeptic. No matter what the CDF of each $i$ looks like up to $\kappa_i$, there will be some inputs with $LB_{i^*} < UB_i - \epsilon$ for which $i^*$ is not $\epsilon$-optimal.

\begin{lemma}\label{lem:sufficientk}
    The skeptic will be convinced that both $\iopt$ and $i^* = \arg\max_i LB_i$ are $\epsilon$-optimal if the captimes $\kappa_i$ are large enough that 
    \begin{kequation}
        \termki{i} \le \Delta_i + \frac{\epsilon}{2}
    \end{kequation}
    for all algorithms $i$. 
\end{lemma}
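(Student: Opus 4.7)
The plan is to show that the hypothesis $u(\kappa_i)(1-F_i(\kappa_i)) \le \Delta_i + \epsilon/2$ directly forces the skeptic's check $LB_{i^*} \ge UB_i - \epsilon$ to pass both for $i^* = \iopt$ and for $i^* = \arg\max_i LB_i$. The key observation is that $UB_i - LB_i = u(\kappa_i)(1-F_i(\kappa_i))$ by definition, so the hypothesis is really a statement about how tight the deterministic confidence window is for algorithm $i$: it is no wider than its suboptimality gap plus a slack of $\epsilon/2$.

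First I would verify the claim for $\iopt$. Since $\Delta_{\iopt} = 0$, the hypothesis gives $UB_{\iopt} - LB_{\iopt} \le \epsilon/2$, and combined with $LB_{\iopt} \le U_{\iopt} \le UB_{\iopt}$ this yields $LB_{\iopt} \ge U_{\iopt} - \epsilon/2$. For any other $i$, the hypothesis plus $UB_i \le U_i + (UB_i - LB_i)$ gives
\begin{align*}
UB_i \;\le\; U_i + \Delta_i + \tfrac{\epsilon}{2} \;=\; U_{\iopt} + \tfrac{\epsilon}{2}.
\end{align*}
Combining these two bounds gives $LB_{\iopt} \ge U_{\iopt} - \epsilon/2 \ge UB_i - \epsilon$ for all $i$, which is exactly the skeptic's check applied with $\iopt$ in the role of the recommended algorithm. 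By \cref{lem:skeptics} this implies $\iopt$ is $\epsilon$-optimal in the skeptic's eyes (of course, $\iopt$ is actually optimal, so this is really just confirming the check passes).

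Second, I would transfer the conclusion to $i^* = \arg\max_i LB_i$. By definition of $i^*$ as the argmax, $LB_{i^*} \ge LB_{\iopt}$, so the same chain of inequalities gives $LB_{i^*} \ge UB_i - \epsilon$ for every $i$, and \cref{lem:skeptics} again certifies $i^*$ as $\epsilon$-optimal.

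There is no real obstacle in this argument — it is essentially an accounting exercise chaining the definitions of $UB_i$, $LB_i$, and $\Delta_i$ with the hypothesis. The only subtle point worth flagging in the write-up is that the $\epsilon/2$ slack is distributed between the two sides of the skeptic's inequality (half going into shrinking $LB_{\iopt}$ below $U_{\iopt}$, half going into inflating $UB_i$ above $U_i$), which is why the captime condition uses $\epsilon/2$ rather than $\epsilon$.
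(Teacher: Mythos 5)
Your proof is correct and takes essentially the same route as the paper's: both arguments chain the identity $UB_i = LB_i + \termki{i}$ with the hypothesis, the deterministic bounds $LB_i \le U_i \le UB_i$, and the definition of $\Delta_i$ to conclude $UB_i \le LB_{\iopt} + \epsilon \le LB_{i^*} + \epsilon$ and then invoke \cref{lem:skeptics}. You merely split the chain at $U_{\iopt}$ and make explicit how the two $\epsilon/2$ slacks are allocated, which is a presentational difference only.
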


Algorithm~\ref{alg:captime-verification} describes the prover-skeptic interaction. 
The next lemma shows that using these captimes is essentially optimal. It gives us a baseline minimum captime against which to compare. 

\begin{lemma}\label{lem:necessaryk}
    There are some inputs for which any verification procedure must use captimes $\kappa_i$ large enough that
    \begin{kequation}
        \termki{i} \le \Delta_i + \epsilon
    \end{kequation}
    for all algorithms $i$, or the skeptic will not be convinced that the returned algorithm is $\epsilon$-optimal. 
\end{lemma}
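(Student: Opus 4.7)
The plan is to exhibit, for each algorithm $i$, an adversarial input on which the skeptic's check in \cref{lem:skeptics} cannot be met unless the captime $\kappa_i$ used by the procedure satisfies $u(\kappa_i)(1 - F_i(\kappa_i)) \le \Delta_i + \epsilon$. Taking the worst case across this family of inputs then forces the stated condition for every $i$.

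Fix $i$ and a target gap $\Delta_i \in (\epsilon, 1 - 2\epsilon)$. I would build a two-algorithm instance consisting of a reference algorithm $r$ that deterministically completes at some time $T_r$ chosen so that $u(T_r) \in (\Delta_i + \epsilon, 1 - \epsilon)$, giving $U_r = u(T_r)$ and $LB_r = UB_r = U_r$ whenever $\kappa_r \ge T_r$; together with the ``obstacle'' algorithm $i$ whose runtime distribution places mass $1 - \alpha$ at time $0$ and mass $\alpha := \Delta_i + 1 - u(T_r)$ at $\infty$. This choice yields $U_i = 1 - \alpha = u(T_r) - \Delta_i$, so that $\Delta_i = U_r - U_i$ matches the lemma's notation; further $LB_i = U_i$ for every $\kappa_i$ (the mass at infinity contributes nothing to $U_i$), and $UB_i - U_i = u(\kappa_i)(1 - F_i(\kappa_i)) = u(\kappa_i)\alpha$ exactly for every finite $\kappa_i$. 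The parameter choice also yields $\alpha \in (\Delta_i + \epsilon, 1 - \epsilon)$.

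The argument then has three steps. First, because $\Delta_i > \epsilon$, algorithm $i$ is not $\epsilon$-optimal, so by \cref{lem:skeptics} the only admissible choice of $i^*$ is the reference $r$. Second, if the prover picks $\kappa_r < T_r$ then $LB_r = 0$ and the check $LB_r \ge UB_i - \epsilon$ becomes $u(\kappa_i)\alpha \le \alpha - (1 - \epsilon)$, whose right-hand side is strictly negative because $\alpha < 1 - \epsilon$; no $\kappa_i$ can satisfy this, so the prover is forced to choose $\kappa_r \ge T_r$. Third, with $LB_r = u(T_r)$ the check $LB_r \ge UB_i - \epsilon$ reads $u(T_r) \ge (1 - \alpha) + u(\kappa_i)\alpha - \epsilon$, which rearranges to $u(\kappa_i)\alpha \le \Delta_i + \epsilon$, i.e., $u(\kappa_i)(1 - F_i(\kappa_i)) \le \Delta_i + \epsilon$. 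Any $\kappa_i$ violating the lemma's bound therefore defeats the skeptic.

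The main obstacle is the boundary case $\Delta_i \le \epsilon$, where algorithm $i$ is itself $\epsilon$-optimal and the prover could simply report $i^* = i$, sidestepping any constraint on $\kappa_i$. I would address this by enriching the input with a second algorithm whose utility ties with $i$'s but which shares the same mass-at-infinity structure, so that whichever of the tied algorithms the prover names as $i^*$, the $UB$ of the other still exceeds the returned $LB$ unless the analogous bound on $\kappa_i$ holds. Running the construction over each choice of ``obstacle'' algorithm $i$ then witnesses the family of inputs required by the lemma.
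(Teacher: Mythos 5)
Your construction is sound and arrives at the right threshold, but it takes a genuinely different route from the paper's proof. The paper does not build an instance from scratch: it takes the CDF prefixes $F_i$ on $[0,\kappa_i]$ as given (whatever the skeptic has already been shown), assumes $\termki{i} > \Delta_i + \epsilon$, and adversarially completes only the tail, placing the residual mass at the finite time $A = u^{-1}\big(\tfrac{1}{2}\big(u(\kappa_i) - \tfrac{\Delta_i+\epsilon}{1-F_i(\kappa_i)}\big)\big)$ so that the tail contributes less than $\termki{i} - \Delta_i - \epsilon$ to $U_i$; it then concludes $UB_i > U_\iopt + \epsilon \ge U_{i^*} + \epsilon \ge LB_{i^*} + \epsilon$, so the skeptic's check fails. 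You instead exhibit one fully explicit two-algorithm instance and solve the skeptic's inequality exactly for $u(\kappa_i)\alpha \le \Delta_i + \epsilon$. Your version makes explicit two things the paper's does not: that the prover is also forced to reveal the reference algorithm past $T_r$, and that the prover could try to evade the constraint by returning $i^* = i$ when $\Delta_i \le \epsilon$ --- a case the paper's chain silently skips, since $UB_i > LB_{i^*} + \epsilon$ is not a violated check when $i = i^*$. The paper's version buys generality: because the prefixes are arbitrary, the captime bound is shown to be necessary for every input whose unseen tail happens to be adversarial, not just for one hand-crafted instance, which is the stronger sense in which the captimes of \cref{lem:sufficientk} are ``essentially optimal.''

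Two loose ends in your write-up should be tightened. First, a point mass at infinity is not a runtime distribution in this model; moving it to a large finite $T$ fixes this but shifts the derived necessary condition to $u(\kappa_i)\alpha \le \Delta_i + \epsilon + \alpha u(T)$ (and makes it vacuous for $\kappa_i \ge T$), so you should either take $T \to \infty$ explicitly or choose $T$ with $u(T)$ negligible relative to $\epsilon$. Second, your repair of the $\Delta_i \le \epsilon$ case is only a sketch; a cleaner closure that needs no auxiliary tied algorithm is to note that if the prover returns $i^* = i$, the skeptic's check against $\iopt$ forces $LB_i \ge UB_\iopt - \epsilon \ge U_\iopt - \epsilon$, and with the tail of $F_i$ concentrated just beyond $\kappa_i$ this already requires $\termki{i} \le \epsilon - \Delta_i \le \Delta_i + \epsilon$.
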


The proof simply constructs a counterexample for which doing runs at a smaller captime will lead any configuration procedure to make a wrong conclusion.

\subsection{Naive Procedure}\label{ss:naive}

\begin{algorithm}[t]
\begin{algorithmic}
    \caption{Naive Procedure}
    \State \textbf{Inputs:} Algorithms $i = 1, ..., n$; stream of instances $j=1, 2,...$; utility function $u$; parameters $\epsilon, \delta$; captime $\kappa$ satisfying $u(\kappa) < \epsilon$. 
    \State $m \gets \Big\lceil \frac{2\ln(2n / \delta)}{(\epsilon - u(\kappa))^2} \Big\rceil$
    \For{$i = 1,...,n$}
        \State $t_{i1}(\kappa), ..., t_{im}(\kappa) \gets$ run $i$ on $m$ instances using captime $\kappa$
        \State $\widehat{U}_{im}(\kappa) \gets \frac{1}{m} \sum_{j=1}^{m} u\big( t_{ij}(\kappa) \big)$ \Comment{empirical average utility}
    \EndFor
    \State $i^* \gets \arg\max_{i} \widehat{U}_{im}(\kappa)$
    \State \textbf{return} $i^*$
\end{algorithmic}
\end{algorithm}

We now turn to the setting we are actually interested in, where runs cost time and long ones must eventually be terminated. The simplest thing we could do is just pick a captime (or perhaps we somehow ``know'' the right captime) and do the necessary number of runs. With a fixed captime, the scenario is not much different from the runtime oracle scenario above. The main difference is that the ``resolution'' at which we can understand an algorithm's expected utility will be limited by the size of the captime we use to take our samples. We will never know what the runtime CDF looks like beyond the captime, no matter how many samples we take. Given any $\kappa$, the confidence bounds described in \cref{sec:setup} tell us we could simply choose the smallest $m$ satisfying $2\sqrt{\frac{\ln(2n/\delta)}{2m}} + u(\kappa) \le \epsilon$, do $m$ runs of each algorithm at captime $\kappa$, then return the one with largest empirical average utility. We can think of $2\sqrt{\frac{\ln(2n/\delta)}{2m}}$ as the error due to sampling, and $u(\kappa)$ as the error due to capping. This is the Naive procedure. 

\begin{theorem}\label{thm:naive}
    With probability at least $1-\delta$ the Naive procedure returns an $\epsilon$-optimal algorithm. It takes enough $\kappa$-capped runtime samples of each algorithm to ensure that
    \begin{kequation}
        2\sqrt{\frac{\ln(2n/\delta)}{2m}} + u(\kappa)\le \epsilon .
    \end{kequation}
\end{theorem}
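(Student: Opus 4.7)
The plan is a standard ``select the empirical best arm'' argument, with the only twist being that we must budget for both sampling error and capping error. I would proceed in three steps.

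First, I would verify the sample size. The choice $m = \lceil 2\ln(2n/\delta)/(\epsilon-u(\kappa))^2 \rceil$ rearranges immediately to $\sqrt{\ln(2n/\delta)/(2m)} \le (\epsilon - u(\kappa))/2$, which is exactly the inequality $2\sqrt{\ln(2n/\delta)/(2m)} + u(\kappa) \le \epsilon$ stated in the theorem. Note that the hypothesis $u(\kappa) < \epsilon$ is what makes this $m$ well-defined.

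Second, I would establish a high-probability ``good event'' under which the empirical capped utilities are simultaneously accurate for every algorithm. Apply Hoeffding's inequality to the bounded random variables $u(t_{ij}(\kappa)) \in [0,1]$ for each fixed $i$, giving $|\widehat{U}_{im}(\kappa) - U_i(\kappa)| \le \sqrt{\ln(2n/\delta)/(2m)}$ with probability at least $1 - \delta/n$, and union-bound over the $n$ algorithms so that the event
\[
|\widehat{U}_{im}(\kappa) - U_i(\kappa)| \le \sqrt{\ln(2n/\delta)/(2m)} \quad \text{for all } i
\]
holds with probability at least $1-\delta$. (One could instead invoke Lemma~\ref{lem:capped} with the tighter $(1-u(\kappa))$ factor; using the looser factor of $1$ is what matches the theorem's bound.)

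Third, on this good event I would chain inequalities to show that the returned $i^*$ is $\epsilon$-optimal. Let $\iopt = \arg\max_i U_i$. Using Lemma~\ref{lem:true} (both directions), the Hoeffding bound above (both directions), the bound $1 - F_{i^*}(\kappa) \le 1$, and the optimality of $i^*$ under $\widehat U_{\cdot m}(\kappa)$, write
\begin{align*}
U_{i^*} &\ge U_{i^*}(\kappa) - u(\kappa)\bigl(1-F_{i^*}(\kappa)\bigr) \\
        &\ge \widehat{U}_{i^*m}(\kappa) - \sqrt{\ln(2n/\delta)/(2m)} - u(\kappa) \\
        &\ge \widehat{U}_{\iopt m}(\kappa) - \sqrt{\ln(2n/\delta)/(2m)} - u(\kappa) \\
        &\ge U_{\iopt}(\kappa) - 2\sqrt{\ln(2n/\delta)/(2m)} - u(\kappa) \\
        &\ge U_{\iopt} - 2\sqrt{\ln(2n/\delta)/(2m)} - u(\kappa) \;\ge\; U_{\iopt} - \epsilon,
\end{align*}
where the final step uses the arithmetic from Step~1.

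There is no real obstacle here; the one thing to be careful about is using each inequality in the correct direction when telescoping through $\widehat U_{i^* m}(\kappa) \ge \widehat U_{\iopt m}(\kappa)$, and in particular applying Lemma~\ref{lem:true}'s lower bound to $U_{i^*}$ (where capping error hurts us) but its upper bound $U_{\iopt}(\kappa) \ge U_{\iopt}$ to the optimum (where capping error is free).
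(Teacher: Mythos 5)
Your proof is correct and follows essentially the same route as the paper's: a Hoeffding-plus-union-bound good event over the $n$ empirical capped utilities, followed by a chain through $\widehat U_{i^*m}(\kappa) \ge \widehat U_{\iopt m}(\kappa)$ using Lemma~\ref{lem:true} in the appropriate directions. If anything, your direct application of Hoeffding to $u(t_{ij}(\kappa)) \in [0,1]$ justifies the $\ln(2n/\delta)$ constant more cleanly than the paper's appeal to Lemma~\ref{lem:confbound}, which is stated with $\ln(4n/\delta)$ because it also tracks the CDF estimates that the Naive procedure never uses.
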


The proof of this and of subsequent theorems are deferred to \cref{sec:proofs}. Each essentially follows from the fact that the upper and lower confidence bounds are specified to satisfy Hoeffding's inequality and the union bound. Comparing \cref{thm:naive} to \cref{thm:oracle} we can see the reason for the increased number of samples required when we observe capped instead of uncapped runs: if $u(\kappa)$ is relatively large, it will take a much larger $m$ to shrink the term $2\sqrt{\frac{\ln(2n/\delta)}{2m}}$ enough to satisfy the inequality in \cref{thm:naive}.

The Naive procedure has some undesirable qualities. Choosing the right $\kappa$ may not be easy and in \cref{sec:experiments} we will show what a difference this choice can make for the total runtime of the Naive procedure. We are also required to specify an $\epsilon$ beforehand; it may hard to know the ``right'' $\epsilon$, and the best $m$ and $\kappa$ for one $\epsilon$ may be quite different from the best for a slightly different $\epsilon$. What's more, this procedure ignores information about an algorithm's runtime distribution that could be learned by observing runs along the way; it essentially assumes that every algorithm always times out at the given captime. As a result, it takes more samples of $\epsilon$-suboptimal algorithms than is necessary. The UP procedure corrects these defects.

\subsection{Utilitarian Procrastination}\label{ss:up}

Our Utilitarian Procrastination (UP) procedure starts by doing runs at the smallest captime possible, trying to rule out whatever configurations we can, and only starts doing runs at a larger captime when necessary.

\begin{algorithm}[t]
\begin{algorithmic}
    \caption{Utilitarian Procrastination}\label{alg:up}
    \State \textbf{Inputs:} Algorithms $i = 1, ..., n$; stream of instances $j=1, 2,...$; utility function $u$; parameter $\delta$.
    \State $I \gets \{1, ..., n\}$ \Comment{candidate algorithms}
    \State $\kappa_i \gets 1$ \textbf{for all} $i \in I$
    \For{$m = 1, 2, 3, ...$}
        \For{$i \in I$}
            \State $t_{i1}(\kappa_i), ..., t_{im}(\kappa_i) \gets$ run $i$ on $m$ instances using captime $\kappa_i$
            \State $\widehat{F}_{im}(\kappa_i) \gets \frac{|\{ j \;:\; t_{ij}(\kappa_i) < \kappa_i \}|}{m}$ \Comment{fraction of runs that completed}
            \State $\widehat{U}_{im}(\kappa_i) \gets \frac{1}{m} \sum_{j=1}^{m} u\big( t_{ij}(\kappa_i) \big)$ \Comment{empirical average utility}
            \State $\alpha_{im} \gets \sqrt{\frac{\ln(11n m^2 (\log\kappa_i + 1)^2/\delta)}{2m}}$
            \State $UCB_{im} \gets \widehat{U}_{im}(\kappa_i) + \big(1-u(\kappa_i)\big)\alpha_{im}$
            \State $LCB_{im} \gets \widehat{U}_{im}(\kappa_i) - \alpha_{im} - \termhatki{i}$
        \EndFor
        \State $i^* \gets \arg\max_{i \in I} LCB_{im}$
        \For{$i \in I$}
            \If{$UCB_{im} < LCB_{i^*m}$} \Comment{$i$ is suboptimal}
                \State $I \gets I \setminus \{i\}$ \Comment{remove $i$}
            \EndIf
            \If{$2\alpha_{im} \le \termhatki{i}$} \Comment{captime doubling condition}
                \State $\kappa_i \gets 2 \kappa_i$ 
            \EndIf
        \EndFor
        \If{$|I| = 1$ \textbf{or} execution is interrupted}
            \State \textbf{return} $i^*$
        \EndIf 
    \EndFor
\end{algorithmic}
\end{algorithm}

\begin{theorem}\label{thm:up}
    With probability at least $1 - \delta$ UP eventually returns the optimal algorithm and it returns an $\epsilon$-optimal algorithm if it is run until $m$ is large enough that
    \begin{kequation}
        2\sqrt{\frac{\ln(11nm^2(\log\kappa_\iopt+1)^2 / \delta)}{2 m}} + \termki{\iopt} \le \epsilon .
    \end{kequation}
    For any suboptimal $i$, if $m, \kappa_i, \kappa_\iopt$ are ever large enough that $2\sqrt{\frac{\ln(11nm^2(\log\kappa_i+1)^2 / \delta)}{2 m}} + 2\sqrt{\frac{\ln(11nm^2(\log\kappa_\iopt+1)^2 / \delta)}{2 m}} + \termki{i} + \termki{\iopt} \le \Delta_i$
    then $i$ will be eliminated. 
\end{theorem}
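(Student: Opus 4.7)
The plan is to reduce both conclusions to a single ``good event'' $E$ under which every confidence bound UP ever consults is valid, and then read each claim directly off UP's elimination rule. Let $E$ be the event that for every algorithm $i\in\{1,\ldots,n\}$, every $m\ge 1$, and every value $\kappa\in\{1,2,4,8,\ldots\}$ that $\kappa_i$ could take, both $LCB_{im}\le U_i\le UCB_{im}$ and the width bound $UCB_{im}-LCB_{im}\le 2\alpha_{im}+\termk{i}{\kappa}$ of \cref{lem:confbound} hold (with $LCB_{im},UCB_{im},\alpha_{im}$ defined using this $\kappa$). Applying \cref{lem:cdf,lem:capped} at each triple $(i,m,\kappa)$ with the parameter $\delta'=\delta/(11\,n\,m^2\,(\log\kappa+1)^2)$ already baked into $\alpha_{im}$ gives failure probability at most $4\delta'$ per triple, so a union bound yields $\Pr(\neg E)\le \frac{4\delta}{11}\bigl(\sum_{m\ge 1}m^{-2}\bigr)\bigl(\sum_{k\ge 0}(k+1)^{-2}\bigr)=\frac{4\delta}{11}\cdot(\pi^2/6)^2<\delta$; the constant $11$ in $\alpha_{im}$ is chosen precisely so that this sum stays strictly below one.

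Under $E$, the optimal algorithm $\iopt$ is never eliminated, because at every round one has $UCB_{\iopt m}\ge U_\iopt\ge U_{i^*}\ge LCB_{i^*m}$, which contradicts UP's elimination trigger $UCB_{\iopt m}<LCB_{i^*m}$. For the $\epsilon$-optimality of the returned $i^*$, I would chain
\[
U_{i^*}\ge LCB_{i^*m}\ge LCB_{\iopt m}\ge U_\iopt-(UCB_{\iopt m}-LCB_{\iopt m})\ge U_\iopt-2\alpha_{\iopt m}-\termki{\iopt},
\]
using $E$, the definition $i^*=\arg\max_{i\in I}LCB_{im}$ (together with $\iopt\in I$), $U_\iopt\le UCB_{\iopt m}$, and the width bound in $E$; the theorem's hypothesis then delivers $U_{i^*}\ge U_\iopt-\epsilon$.

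For elimination of a suboptimal $i$, under $E$ we have $UCB_{im}\le U_i+2\alpha_{im}+\termki{i}$ (from the width bound plus $LCB_{im}\le U_i$) and $LCB_{i^*m}\ge LCB_{\iopt m}\ge U_\iopt-2\alpha_{\iopt m}-\termki{\iopt}$ as above, so UP's trigger $UCB_{im}<LCB_{i^*m}$ fires as soon as the sum of the two widths drops below $\Delta_i=U_\iopt-U_i$, which is precisely the stated hypothesis. The main obstacle is arranging the union bound to simultaneously cover algorithms, sample counts, and the entire doubling sequence of captimes, while keeping the per-triple $\delta'$ small enough that the $LCB$ defined via the empirical $\widehat{F}_{im}$ still bounds $U_i$ from below: the slack \cref{lem:cdf} gives between $\widehat F_{im}(\kappa)$ and $F_i(\kappa)$ is exactly what converts the raw empirical gap $(2-u(\kappa))\alpha_{im}+u(\kappa)(1-\widehat F_{im}(\kappa))$ into the clean bound $2\alpha_{im}+\termk{i}{\kappa}$ used throughout, and is what forces the three-way union bound to close with the constant $11$.
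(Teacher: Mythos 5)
Your proposal is correct and follows essentially the same route as the paper's proof: a ``clean execution'' / good event obtained by union-bounding the \cref{lem:confbound} confidence bounds over algorithms, sample counts, and the doubling sequence of captimes (with the constant $11$ absorbing the two factors of $\pi^2/6$), followed by reading the three claims off the elimination rule via the same chains of inequalities. No substantive differences to report.
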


In \cref{thm:up} we can clearly see the analog of both \cref{thm:oracle} and \cref{thm:naive}. UP is input dependent in two senses. The condition for ruling out a suboptimal $i$ depends on $\Delta_i$ and also on the runtime CDFs $F_i$ and $F_\iopt$. Just as in the pure sample complexity case of \cref{thm:oracle}, if $i$ is very suboptimal and $\Delta_i$ is large, it will be easier to satisfy the condition in \cref{thm:up} and thus rule $i$ out. Additionally, the condition will be easier to satisfy if the inputs are such that either $i$ or $\iopt$ is able to complete a lot of runs at their respective captimes, so that their CDFs are close to 1. \cref{thm:up} also implies the following theorem and corollary, which together constitute our main theoretical result. 

\begin{theorem}\label{thm:up2}
    For any $m$, let $\epsilon = 3\sqrt{\frac{\ln(11 n m^4 / \delta)}{2m}}$. Then with probability at least $1 - \delta$, UP returns an $\epsilon$-optimal algorithm once it takes $m$ samples. At that point, if $\kappa_i^*$ is the largest captime algorithm $i$ has been run with then $\kappa_i^* \le 2\inf\big\{\kappa \;:\; \term{i} < \frac{\epsilon}{3\sqrt{2}} \big\}$.
\end{theorem}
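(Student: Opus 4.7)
The plan is to derive Theorem~\ref{thm:up2} from Theorem~\ref{thm:up} by controlling the two $\kappa_\iopt$-dependent quantities in its sufficient condition, and then to verify the captime bound by a contradiction argument using UP's doubling rule. The key first observation is that because UP initializes each $\kappa_i$ to $1$ and doubles it at most once per outer iteration, after $m$ iterations $\kappa_i \le 2^m$; substituting this into the definition of $\alpha_{im}$ gives the uniform bound $\alpha_{im} \le \sqrt{\ln(11nm^4/\delta)/(2m)} = \epsilon/3$ (absorbing small constants into the $11$), so the sampling term $2\sqrt{\ln(11nm^2(\log\kappa_\iopt+1)^2/\delta)/(2m)}$ in Theorem~\ref{thm:up}'s condition is at most $2\epsilon/3$.

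To bound the remaining capping term $u(\kappa_\iopt)(1-F_\iopt(\kappa_\iopt))$ by $\epsilon/3$, I would combine the captime bound (established below) with the monotonicity of $u(\kappa)(1-F_\iopt(\kappa))$ and a direct application of Lemma~\ref{lem:cdf}: at iteration $m$ the doubling rule has either just triggered (in which case the pre-doubling empirical capping error controls the post-doubling true capping error) or it has failed (in which case the empirical capping error is already small, and Lemma~\ref{lem:cdf} transfers this to a comparable bound on the true capping error). In either case the capping term falls below $\epsilon/3$, Theorem~\ref{thm:up}'s condition is satisfied, and $\epsilon$-optimality follows with probability at least $1 - \delta$ from the union bound baked into $\alpha_{im}$.

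For the captime bound, fix $i$, let $\kappa^\dagger = \inf\{\kappa : u(\kappa)(1-F_i(\kappa)) < \epsilon/(3\sqrt{2})\}$, and suppose for contradiction that $\kappa_i^* > 2\kappa^\dagger$. Then at some iteration $m' \le m$, UP doubled $\kappa_i$ from $\kappa_i^*/2$ to $\kappa_i^*$ with $\kappa_i^*/2 > \kappa^\dagger$, so the doubling condition $2\alpha_{im'} \le u(\kappa_i^*/2)(1-\widehat{F}_{im'}(\kappa_i^*/2))$ held. By monotonicity of $u(\kappa)(1-F_i(\kappa))$, the true capping error at $\kappa_i^*/2$ is strictly less than $\epsilon/(3\sqrt{2})$; by Lemma~\ref{lem:cdf} (union-bounded into $\alpha_{im'}$), the empirical value exceeds it by at most $u(\kappa_i^*/2)\alpha_{im'}$. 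Substituting and rearranging yields $\alpha_{im'}(2 - u(\kappa_i^*/2)) \le \epsilon/(3\sqrt{2})$, hence $\alpha_{im'} \le \epsilon/(3\sqrt{2})$. But the definition of $\alpha$, together with $m' \le m$ and $\log(\kappa_i^*/2) + 1 \le m'$ (one iteration per doubling), forces $\alpha_{im'} > \epsilon/(3\sqrt{2})$ strictly, contradicting the derived bound.

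The main obstacle is the constant-chasing in this contradiction: the threshold $\epsilon/(3\sqrt{2})$ and the leading factor of $2$ in $\kappa_i^* \le 2\kappa^\dagger$ are tuned precisely so that the derived upper bound on $\alpha_{im'}$ strictly contradicts its implicit lower bound. Beyond this, the argument is routine, as the $(\log\kappa + 1)^2 m^2$ factor inside $\alpha_{im}$ is exactly what is needed to union-bound Lemma~\ref{lem:cdf} simultaneously over all $(i, m, \kappa)$ triples reachable by the algorithm, and the bound $\kappa_i \le 2^m$ keeps that union finite.
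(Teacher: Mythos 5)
Your plan coincides with the paper's proof on two of its three components. The sampling term is handled identically: since each $\kappa_i$ starts at $1$ and doubles at most once per round, $\log\kappa_i + 1 \le m$ and hence $\alpha_{im} \le \sqrt{\ln(11nm^4/\delta)/(2m)} = \epsilon/3$. Your contradiction argument for the captime bound is simply the contrapositive of the paper's direct computation: at the last doubling round $m^*$ the condition $2\alpha_{im^*} \le \termhatk{i}{\kappa_i^*/2}$ held, so \cref{lem:cdf} forces $\termk{i}{\kappa_i^*/2} \ge \big(2 - u(\kappa_i^*/2)\big)\alpha_{im^*} \ge \alpha_{im^*} \ge \epsilon/(3\sqrt{2})$, and monotonicity of $\term{i}$ gives $\kappa_i^*/2 \le \inf\big\{\kappa : \term{i} < \frac{\epsilon}{3\sqrt{2}}\big\}$. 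Both of those pieces are sound.

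The gap is in your handling of the capping term $\termki{\iopt}$. First, the monotonicity argument runs in the wrong direction: the captime bound $\kappa_\iopt^* \le 2\kappa^\dagger$ is an \emph{upper} bound on the captime, and since $\term{\iopt}$ is non-increasing in $\kappa$ this only yields a \emph{lower} bound on the capping error at the current captime; nothing in that bound prevents $\kappa_\iopt$ from still sitting well below $\kappa^\dagger$ at round $m$, where the capping error may be large. Second, your ``just triggered'' branch gives no control either: the doubling rule fires precisely when the \emph{empirical} capping error is large (at least $2\alpha_{\iopt m}$), so the pre-doubling quantities cannot upper-bound the post-doubling error, and there are no samples yet at the new captime. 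The only mechanism that works --- and the one the paper actually uses --- is your other branch: at the round where the conclusion is drawn, the doubling condition for $\iopt$ \emph{fails}, i.e.\ $2\alpha_{\iopt m} > \termhatki{\iopt}$, and \cref{lem:cdf} then gives $\termki{\iopt} \le \big(2 + u(\kappa_\iopt)\big)\alpha_{\iopt m} \le 3\alpha_{\iopt m}$. Note that this bounds the capping term by roughly $\epsilon$, not by $\epsilon/3$ as you assert, so with your accounting the total is about $\frac{2\epsilon}{3} + \epsilon$ rather than $\epsilon$ (the paper's own write-up is similarly loose at exactly this step). You should restructure this part of the argument around the failing doubling condition and track the constants explicitly rather than deferring them to ``constant-chasing.''
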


The proof follows from \cref{thm:up} and from UP's specific choice of captime doubling condition. We note that there is room for improvement, but that this captime bound is comparable to the worst-case lower bound captime needed by any configuration procedure as presented in \cref{sec:verification}. The smallest captime that satisfies $\term{i} < \frac{\epsilon}{3\sqrt{2}}$ needs to be larger than the smallest captime that satisfies $\term{i} < \Delta_i + \epsilon$, which is the best we can hope to do. 

\begin{corollary}
    With probability at least $1 - \delta$, UP returns an $\epsilon$-optimal algorithm after taking a number of samples that at most a logarithmic factor more than is optimally required by any procedure, and at a captime that is a constant larger than $\inf\big\{\kappa \;:\; \term{i} < \frac{\epsilon}{3\sqrt{2}} \big\}$.
\end{corollary}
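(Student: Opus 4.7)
The plan is to derive both claims of the corollary directly from Theorem \ref{thm:up2}, without re-entering the high-probability event (which Theorem \ref{thm:up2} already establishes with probability at least $1-\delta$). The captime half is essentially a restatement: Theorem \ref{thm:up2} says that at the moment UP has taken $m$ samples and the returned algorithm is certified $\epsilon$-optimal (for $\epsilon = 3\sqrt{\ln(11nm^4/\delta)/(2m)}$), the largest captime used for any $i$ satisfies $\kappa_i^* \le 2 \inf\{\kappa : \term{i} < \epsilon/(3\sqrt{2})\}$. Thus the ``constant'' in the corollary's captime claim is simply $2$, and no further work is needed beyond quoting this bound.

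For the sample-complexity half, I would invert the relation $\epsilon = 3\sqrt{\ln(11nm^4/\delta)/(2m)}$ to express $m$ as a function of $\epsilon$. Squaring gives $m = \tfrac{9}{2\epsilon^2}\ln(11nm^4/\delta)$, and substituting the upper-bound estimate $m \le O(\epsilon^{-2}\log(n/(\epsilon\delta)))$ back into the logarithm (a standard fixed-point argument) yields
\begin{equation*}
m \;=\; O\!\left(\frac{1}{\epsilon^2}\log\!\frac{n}{\epsilon\delta}\right).
\end{equation*}
Comparing this to the lower bound of Theorem 5 of \citet{mannor2004sample}, which on the $\epsilon$-optimal arms alone requires $\Omega\!\left(\frac{n-|N(\epsilon)|}{\epsilon^2}\log(1/\delta)\right)$ samples in the worst case, the ratio between UP's per-arm sample count and the lower-bound per-arm sample count is $O(\log(n/(\epsilon\delta))/\log(1/\delta))$, i.e.\ only a logarithmic factor in $n$ and $1/\epsilon$. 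This is exactly the ``at most a logarithmic factor more than is optimally required'' claim.

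The main subtlety, and the only real obstacle, is being careful about which sample-complexity lower bound one compares against and verifying that the input-dependent part of Theorem \ref{thm:up2} (the $\Delta_i$-based elimination condition) also matches the corresponding $\sum_{i \in N(\epsilon)} \Delta_i^{-2}\log(1/\delta)$ term of the lower bound up to a logarithmic factor. This follows by the same fixed-point argument applied to the elimination inequality $2\sqrt{\ln(11nm^2(\log\kappa_i+1)^2/\delta)/(2m)} + 2\sqrt{\ln(11nm^2(\log\kappa_\iopt+1)^2/\delta)/(2m)} + \term{i} + \term{\iopt} \le \Delta_i$: once $\kappa_i$ and $\kappa_\iopt$ have been doubled enough that the capping terms fall below $\Delta_i/2$ (which happens at captimes comparable to those bounded in Theorem \ref{thm:up2}), the remaining sampling condition yields $m_i = O(\Delta_i^{-2}\log(n/(\Delta_i\delta)))$, again a logarithmic factor above the $\Omega(\Delta_i^{-2}\log(1/\delta))$ lower bound. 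Assembling the two parts gives the corollary.
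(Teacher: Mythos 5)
Your proposal is correct and follows essentially the same route as the paper: both halves are read off from Theorem~\ref{thm:up2}, with the sample count obtained by inverting $\epsilon = 3\sqrt{\ln(11nm^4/\delta)/(2m)}$ and comparing to the \citet{mannor2004sample} lower bound (your fixed-point argument just makes explicit what the paper leaves as ``immediate''). The only piece of the paper's proof you omit is the observation that, because captimes are doubled, the \emph{cumulative} time UP spends re-running a single instance across all doublings is a geometric sum bounded by a constant times the final captime $\kappa_i^*$, which is how the paper justifies ``a constant larger'' for total per-instance time rather than merely for the largest captime parameter used.
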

\begin{proof}
    The number of samples follows immediately from the definition of $\epsilon$ in \cref{thm:up2}. Because the captimes are always doubled, the total time spent by UP on any single instance is at most a constant times larger than the time spent running that instance the final time it was run. Since the captime at that point was at most $\kappa_i^*$, the total time spent running any instance is at most a constant times $\inf\big\{\kappa \;:\; \term{i} < \frac{\epsilon}{3\sqrt{2}} \big\}$. 
\end{proof}

\section{Experiments}\label{sec:experiments}

We now illustrate the runtime costs of utilitarian algorithm configuration and the impacts of the adaptive improvements offered by UP over Naive. We would have liked to go further, and in particular to offer comparisons against baselines other than Naive. However, we saw no straightforward way of doing so:  there is simply no previous work on offering algorithm configuration in the utility-maximizing setting. Our paper focuses on algorithm configuration with theoretical guarantees. There do exist many methods with guarantees in the runtime minimizing setting; we could have run existing procedures like Structured Procrastination, LeapsAndBounds, or CapsAndRuns and then evaluated them in terms of a utilitarian objective. However, such an apples-to-oranges comparison would likely have yielded very poor performance and regardless would have dispensed with the guarantees that motivate these methods. Second, we could have compared to heuristic methods like SMAC, ParamILS and GGA. Again, however, they optimize a different objective function. It is possible to imagine modifying one of these algorithms to optimize utility, but doing so would  require fundamental algorithmic changes (e.g., to their so-called adaptive capping heuristics) and nontrivial software engineering effort. Even if we could have made such changes in a non-controversial way, comparing heuristic methods to methods offering guarantees would again be an apples-to-oranges comparison. Perhaps for these reasons, we note that no paper of which we are aware has yet compared any heuristic algorithm configuration method to any offering theoretical guarantees. We do intend to pursue such an investigation ourselves in future work, but anticipate that a careful study of this question will require an entire research paper. 

\minipar{Experimental Setup} We leverage three datasets from \citet{weisz2020impatientcapsandruns}. 
The first is a set of runtimes for the minisat SAT solver on data generated by the CNFuzzdd instance generator. The others are sets of runtimes for the CPLEX integer program solver on the combinatorial auction winner determination instances (regions) and on woodpecker conservation problems (rcw); see Appendix D of \citet{weisz2020impatientcapsandruns} for details. We only used the first seed for the CPLEX datasets.\footnote{Code to reproduce all plots can be found at \href{https://github.com/drgrhm/utilitarian-ac}{https://github.com/drgrhm/utilitarian-ac}.}

\begin{figure}[t]
     \centering
     \begin{subfigure}[b]{0.3\textwidth}
         \centering
         \includegraphics[width=\textwidth]{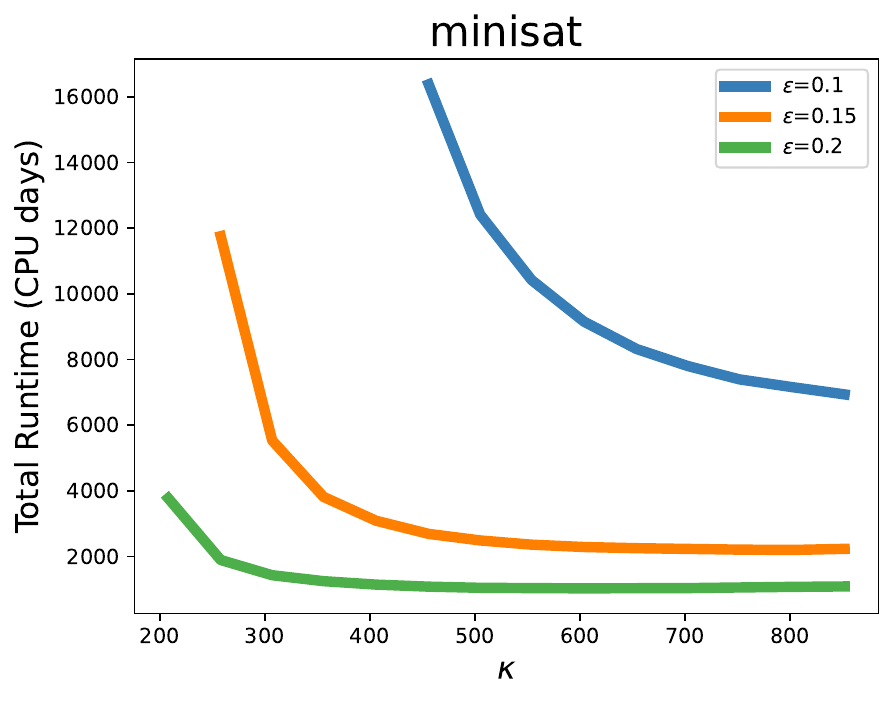}
     \end{subfigure}
     \hfill
     \begin{subfigure}[b]{0.3\textwidth}
         \centering
         \includegraphics[width=\textwidth]{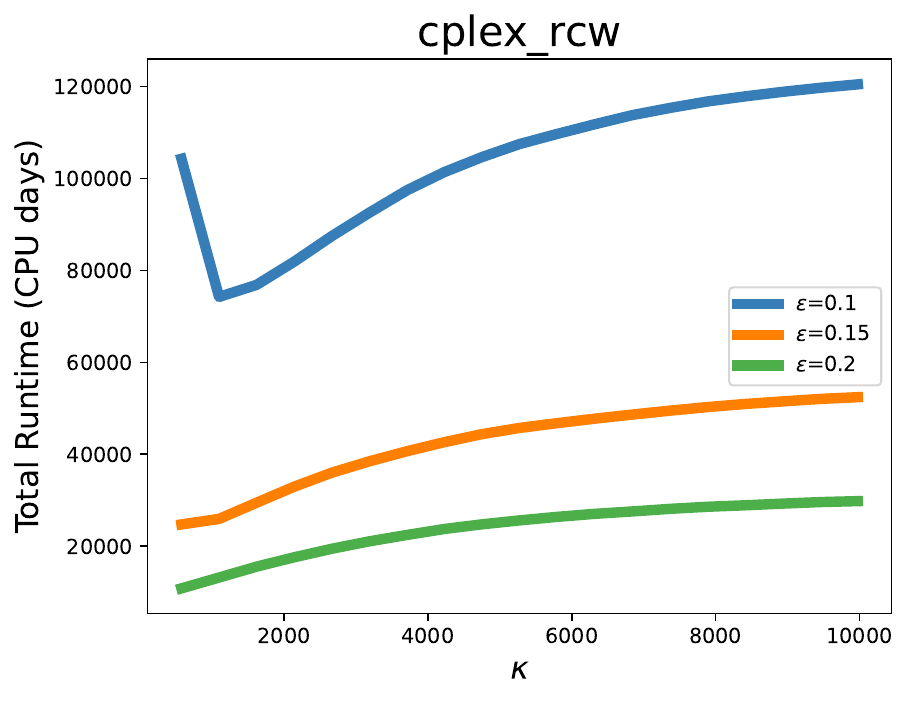}
     \end{subfigure}
     \hfill
     \begin{subfigure}[b]{0.3\textwidth}
         \centering
         \includegraphics[width=\textwidth]{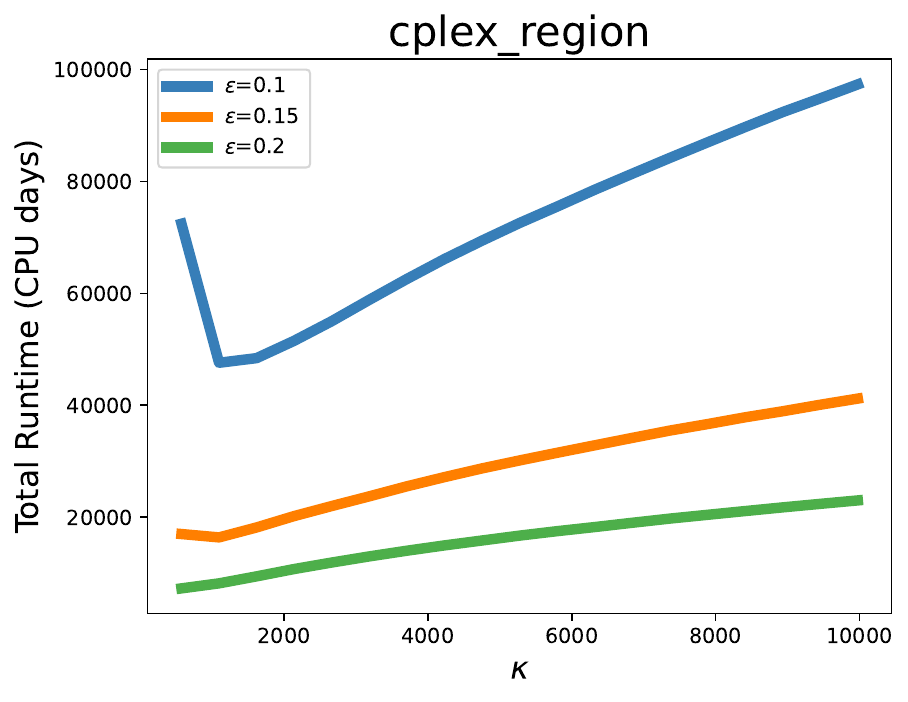}
     \end{subfigure}
        \caption{\footnotesize Runtime of the Naive procedure for different values of input captime $\kappa$ on three datasets using a log-Laplace utility function. Choosing a bad captime can have a large effect on total configuration time, especially for smaller $\epsilon$ values.}
        \label{fig:naive_captime}
\end{figure}

\minipar{Choosing the Right Captime Matters}\label{sec:experiments_naive_captime}
In \cref{fig:naive_captime} we plot the total configuration time of the Naive procedure with different input captimes $\kappa$ on three datasets from \citet{weisz2020impatientcapsandruns} using a log-Laplace utility function from \citet{graham2023formalizing}: $u(t) = u_{LL}(t ; 60, 1) = 1 - \frac{1}{2}\frac{t}{60}$ if $t \le 60$ and $u(t) = \frac{1}{2}\frac{60}{t}$ otherwise. We used $\epsilon$ values of $0.1$, $0.15$ and $0.2$ and set $\delta = 0.1$. The first observation we make is that choosing a bad captime can have a large effect on total configuration time, especially for smaller $\epsilon$ values. The second observation is that, for any fixed $\kappa$, different values of $\epsilon$ can also have huge effects on total runtime of the configuration procedure. Both of these points help to emphasize the need for a procedure like UP that starts out with a small $\kappa$, only increasing it as needed, and refines the $\epsilon$ it can guarantee ``on the fly'', based on the runs it has observed.

\minipar{Anytime Speedups Matter}
In \cref{fig:speedup_results}, we plot the total configuration time of UP and Naive as a function of $\epsilon$. We used the same log-Laplace utility function as above and set $\delta = 0.1$ (within reasonable ranges, the value of $\delta$ has relatively little effect on total runtime). UP can drastically outperform Naive, especially for smaller values of $\epsilon$. 

\begin{figure}[t]
     \centering
     \begin{subfigure}[b]{0.3\textwidth}
         \centering
         \includegraphics[width=\textwidth]{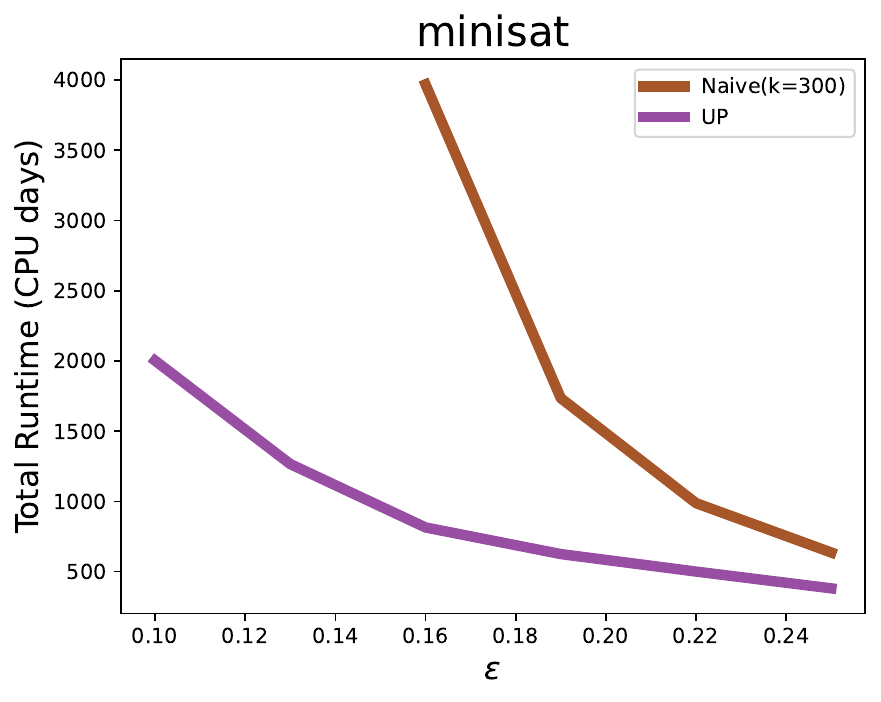}
     \end{subfigure}
     \hfill
     \begin{subfigure}[b]{0.3\textwidth}
         \centering
         \includegraphics[width=\textwidth]{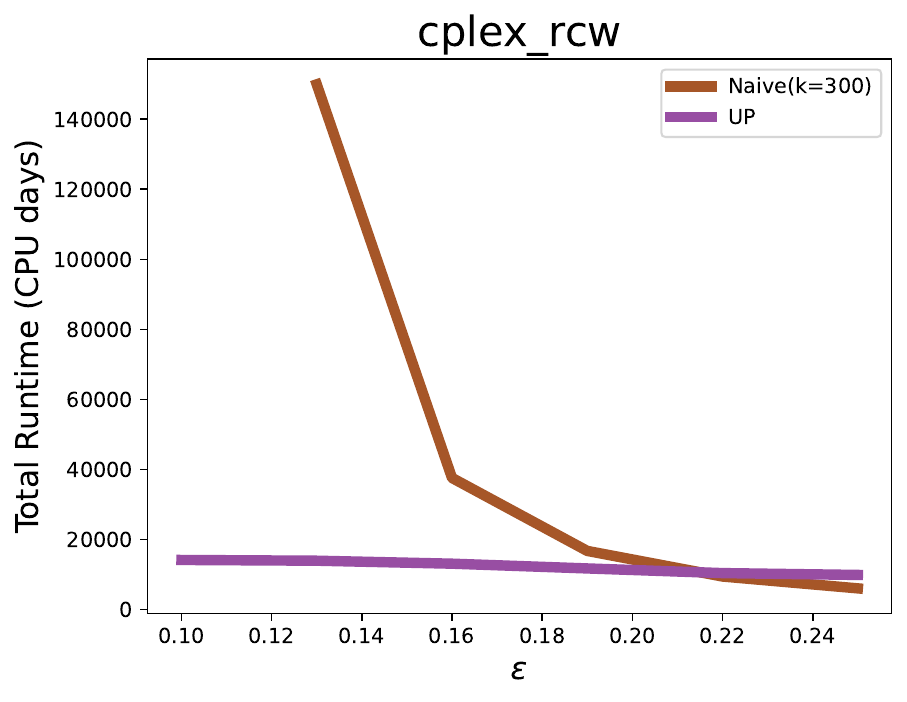}
     \end{subfigure}
     \hfill
     \begin{subfigure}[b]{0.3\textwidth}
         \centering
         \includegraphics[width=\textwidth]{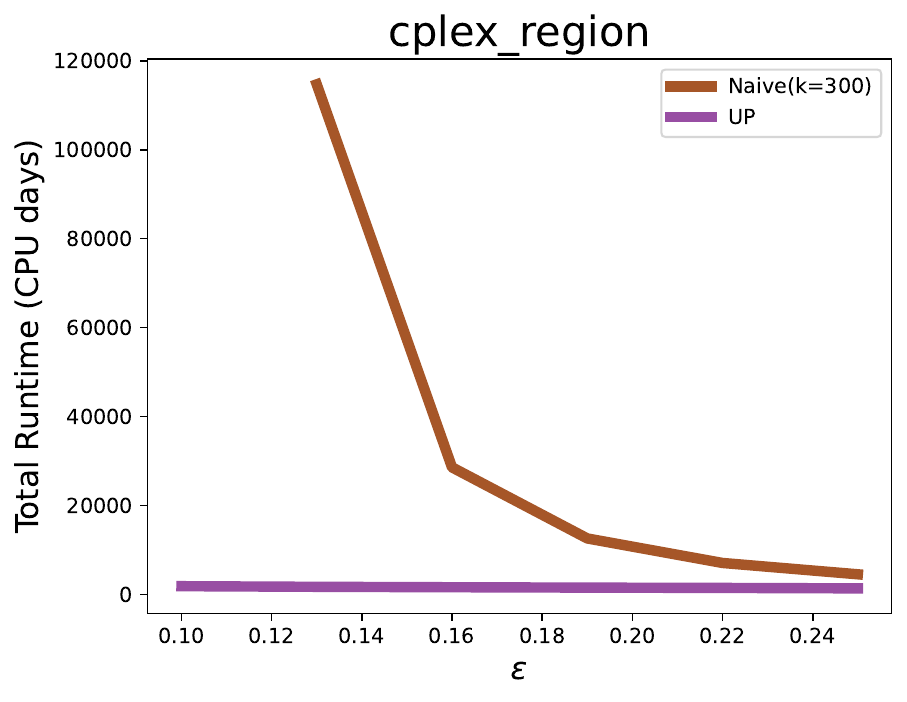}
     \end{subfigure}
        \caption{\footnotesize Runtime of the different configuration procedures for different values of $\epsilon$ on three different datasets using a log-Laplace utility function. UP easily outperforms Naive.}
        \label{fig:speedup_results}
\end{figure}

Unsurprisingly, being anytime in $\kappa$ helps most when we do not know how to provide a good $\kappa$ as input beforehand. If we somehow guess or know the right captime to use, then it can be hard to beat Naive. But if we use the wrong captime, UP can be much faster than Naive. In \cref{fig:right_captime_results} we use a uniform utility function from \citet{graham2023formalizing}, with $u(t) = u_{unif}(t ; 60) = 1 - \frac{t}{60}$ for $t < 60$ and $0$ otherwise. In the top row, we have used a captime of $\kappa = 60$ for Naive, which is appropriate for this utility function since it is linear up to that point and then 0 thereafter. In the bottom row, we have set the captime poorly at $\kappa = 600$, meaning that we are not terminating some runs even though they are so long they give us $0$ utility. We can see that if we get the captime right, then Naive will perform quite well. But if we get it wrong, Naive can drastically underperform compared to UP.

\begin{figure}[t]
     \centering
     \begin{subfigure}[b]{0.3\textwidth}
         \centering
         \includegraphics[width=\textwidth]{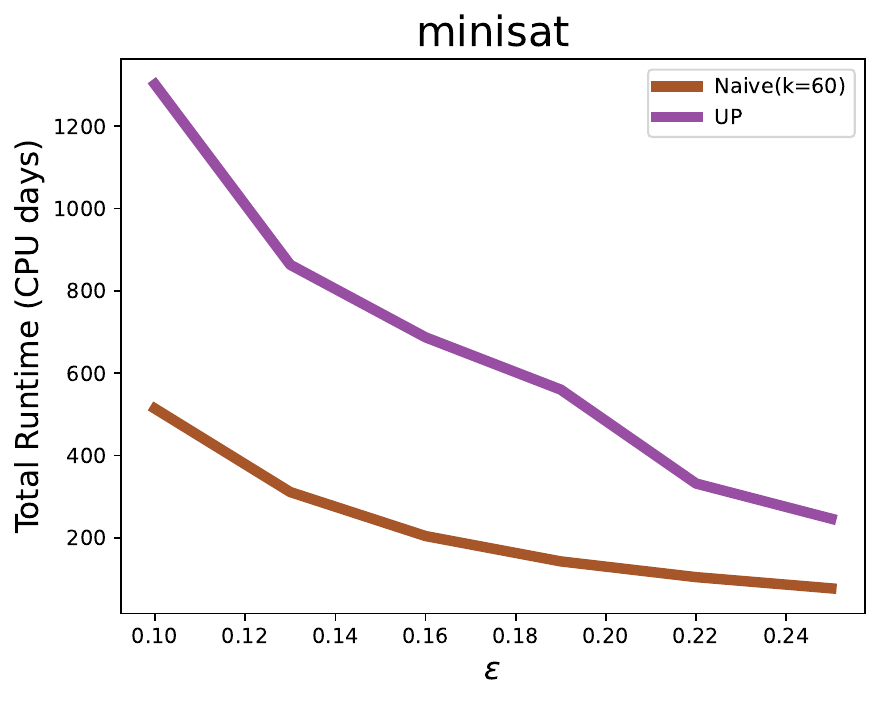}
     \end{subfigure}
     \hfill
     \begin{subfigure}[b]{0.3\textwidth}
         \centering
         \includegraphics[width=\textwidth]{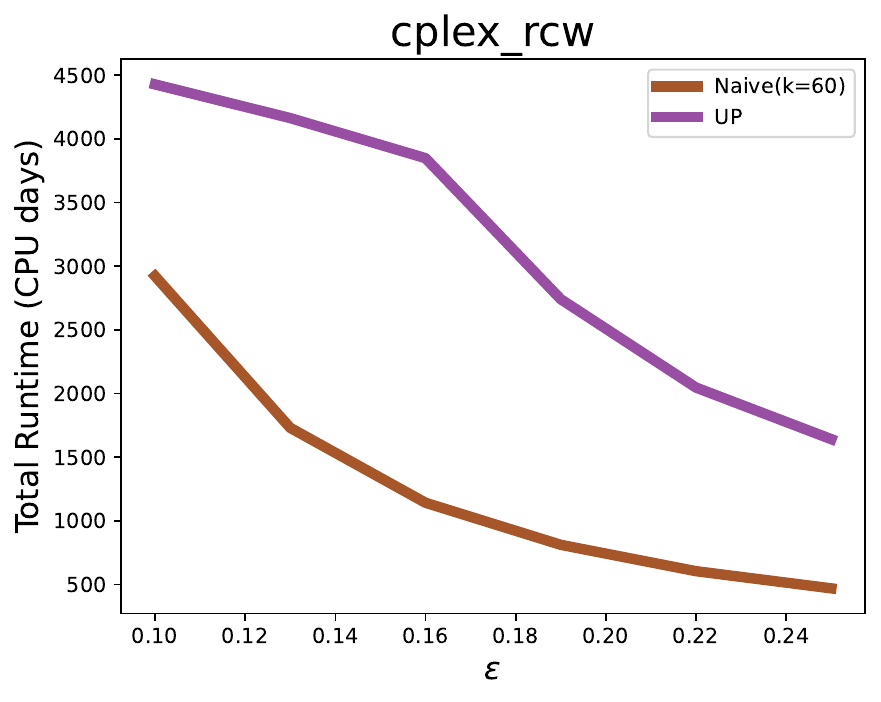}
     \end{subfigure}
     \hfill
     \begin{subfigure}[b]{0.3\textwidth}
         \centering
         \includegraphics[width=\textwidth]{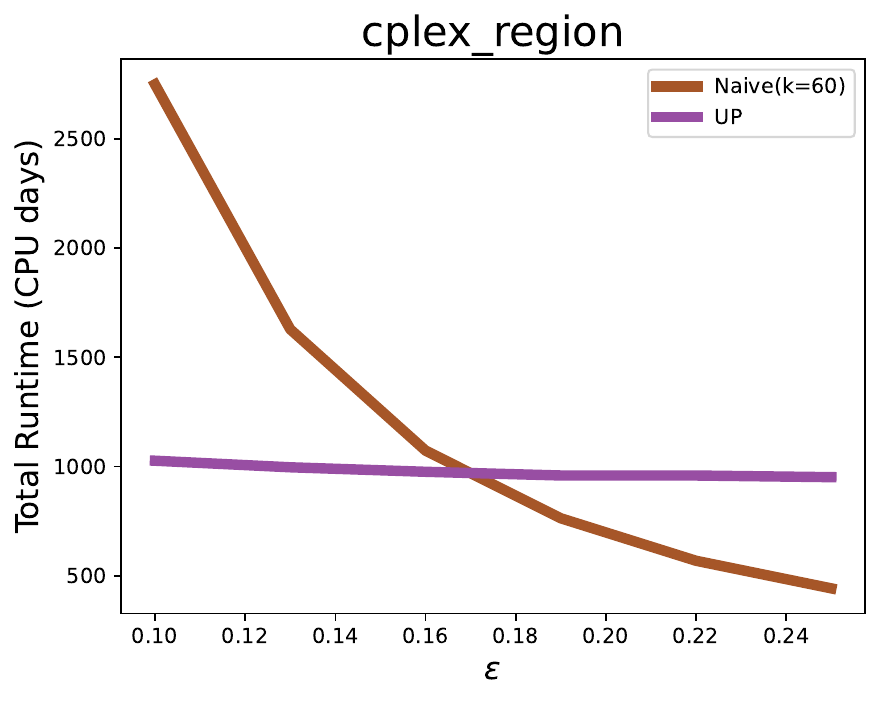}
     \end{subfigure}
     \begin{subfigure}[b]{0.3\textwidth}
         \centering
         \includegraphics[width=\textwidth]{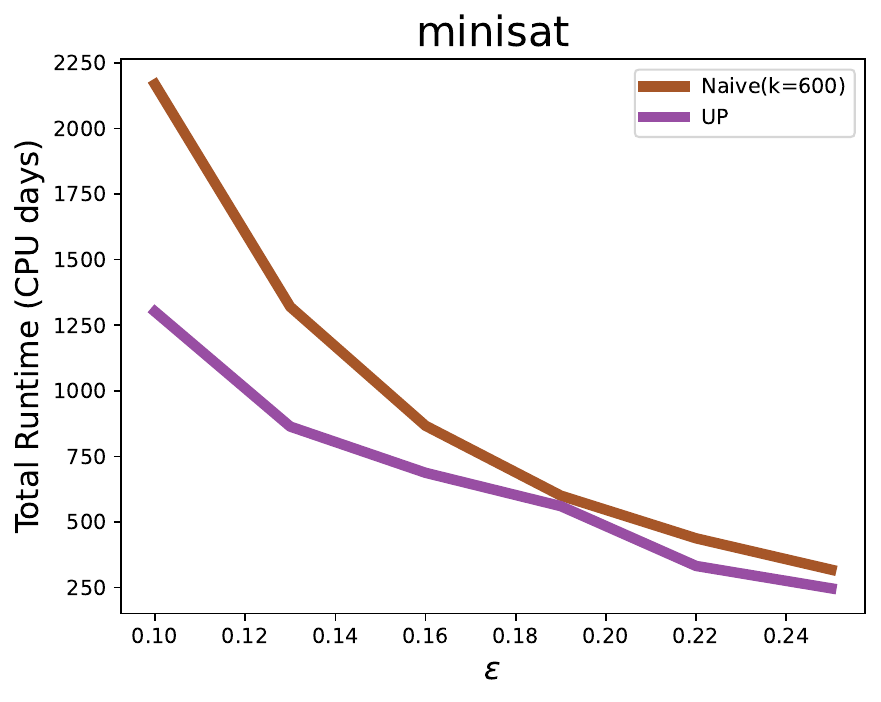}
     \end{subfigure}
     \hfill
     \begin{subfigure}[b]{0.3\textwidth}
         \centering
         \includegraphics[width=\textwidth]{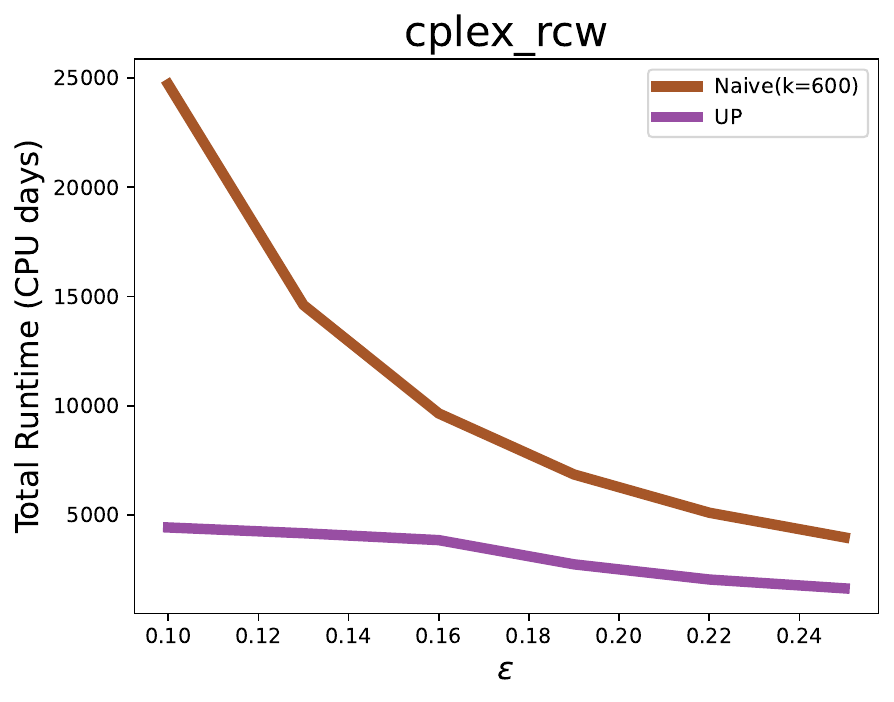}
     \end{subfigure}
     \hfill
     \begin{subfigure}[b]{0.3\textwidth}
         \centering
         \includegraphics[width=\textwidth]{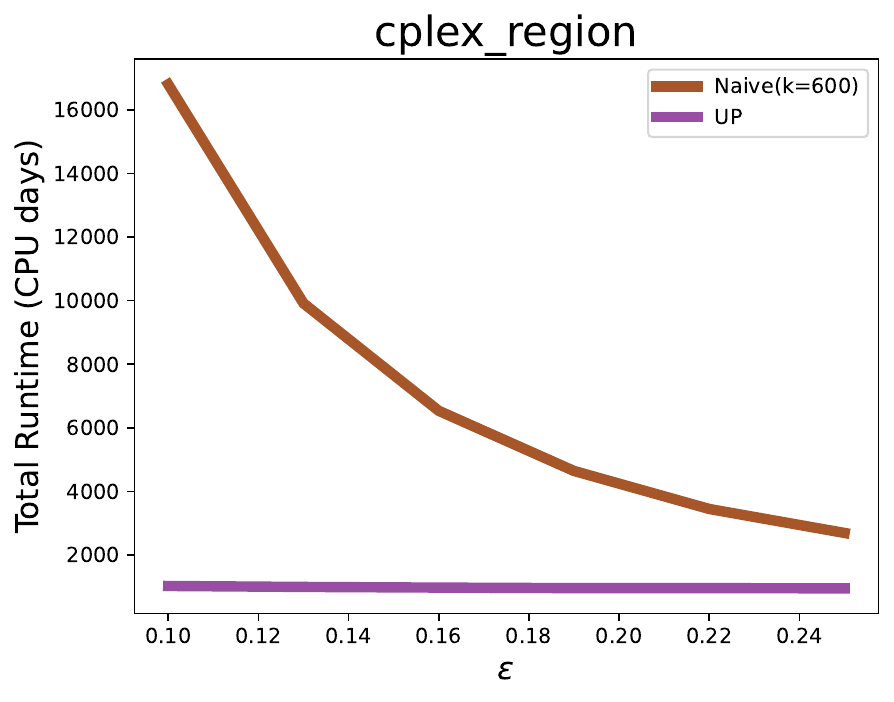}
     \end{subfigure}
        \caption{\footnotesize Runtime of the different configuration procedures for different values of $\epsilon$ on three different datasets using a uniform utility function. The top row shows a scenario where the Naive procedure has been given an appropriate captime, the bottom row shows a scenario where it has not.} \label{fig:right_captime_results}
\end{figure}

\section{Conclusion}

We have presented Utilitarian Procrastination (UP), a utility-based approach to automated algorithm configuration. This is the first procedure that we know of to incorporate utility functions into the algorithm configuration framework. UP is anytime, meaning it requires minimal input from the user and continues to refine its guarantees as it is run. In simple experiments, we show that by freeing the user from having to provide either an accuracy parameter $\epsilon$ or a captime $\kappa$ UP can help avoid the excessively long runtimes that come when these inputs are chosen poorly. Of course we would have liked to have shown that UP never uses a captime larger than $\kappa_i^* \le 2\inf\big\{ \kappa \;:\; \term{i} < \Delta_i + \frac{\epsilon}{2} \big\}$, matching the hypothetical procedure. The bound we do show is a result of the condition UP checks when deciding whether to double the the captime $\kappa_i$. UP uses a simple condition that balances the error due to sampling with the error due to capping. Other, more intelligent choices may be possible, but improving this will have to wait for future work.

\newpage

 \section{Acknowledgements}

Graham and Leyton-Brown were funded by an NSERC Discovery Grant, a DND/NSERC Discovery Grant Supplement, a CIFAR Canada AI Research Chair (Alberta Machine Intelligence Institute), a Compute Canada RAC Allocation, and DARPA award FA8750-19-2-0222, CFDA \#12.910 (Air Force Research Laboratory). Roughgarden's research at Columbia University is supported in part by NSF awards CCF-2006737 and CNS-2212745. 
 

\bibliographystyle{apalike}
\bibliography{refs}

\newpage

\appendix 

\section{Deferred Proofs}\label{sec:proofs}

\paragraph{\cref{lem:confbound}:} If we do $m$ runs of each algorithm at captime $\kappa$, then with probability at least $1 - \delta$ we will have 
    \begin{kequation}
        LCB_{im} \le U_{i} \le UCB_{im}
    \end{kequation}
    and 
    \begin{kequation}
        UCB_{im} - LCB_{im} \le 2 \sqrt{\frac{\ln(4n/\delta)}{2m}} + \term{i}  
    \end{kequation}
    for all $i$ simultaneously. 
\begin{proof}
    Define the good events:
    \begin{align*}
        \mathcal{C}_{i} &= \bigg\{ \widehat{F}_{i}(\kappa) - \sqrt{\frac{\ln(4n/\delta)}{2m}} \le F_i(\kappa) \le \widehat{F}_{i}(\kappa) + \sqrt{\frac{\ln(4n/\delta)}{2m}} \bigg\} \\
        \mathcal{B}_{i} &= \bigg\{ \widehat{U}_{i} -  \big( 1 - u(\kappa) \big)\sqrt{\frac{\ln(4n/\delta)}{2m}} \le U_{i}^{(\kappa)} \le \widehat{U}_{i} + \big( 1 - u(\kappa) \big)\sqrt{\frac{\ln(4n/\delta)}{2m}} \bigg\}.
    \end{align*}
    Then by \cref{lem:cdf,lem:capped} the good events hold simultaneously for all $i$ simultaneously with probability at least $1 - \delta$. 
    
    If the good events hold, we will have 
    \begin{align*}
        LCB_i &= \widehat{U}_i - u(\kappa)(1 - \widehat{F}_i(\kappa)) - \sqrt{\frac{\ln(4n/\delta)}{2m}} &&\text{(definition)}\\
        &\le \widehat{U}_i - u(\kappa)\bigg(1 - F_i(\kappa) - \sqrt{\frac{\ln(4n/\delta)}{2m}} \bigg) - \sqrt{\frac{\ln(4n/\delta)}{2m}} &&\text{(good events)}\\
        &\le U_{i}^{(\kappa)} - \term{i} &&\text{(good events)}\\
        &\le U_{i}  &&\text{(\cref{lem:true})}\\
        &\le U_{i}^{(\kappa)}  &&\text{(\cref{lem:true})}\\
        &\le \widehat{U}_i + \big(1-u(\kappa)\big)\sqrt{\frac{\ln(4n/\delta)}{2m}} &&\text{(good events)}\\
        &= UCB_{i} &&\text{(definition)}.
    \end{align*}
    And also    
    \begin{align*}
        UCB_{i} - LCB_{i} &= u(\kappa)\big( 1 - \widehat{F}_{i, \ell} \big) + \big(2 - u(\kappa)\big)\sqrt{\frac{\ln(4n/\delta)}{2m}}  &&\text{(definition)}\\
        &\le u(\kappa)\big( 1 - F_i(\kappa) \big) + 2\sqrt{\frac{\ln(4n/\delta)}{2m}} &&\text{(good events)}.
    \end{align*}
\end{proof}

\paragraph{\cref{lem:skeptics}:} If $LB_{i^*} \ge UB_{i} - \epsilon$ for all $i \not= i^*$, then $i^*$ is $\epsilon$-optimal. If there exists an $i$ with $LB_{i^*} < UB_i - \epsilon$, then regardless of the values of the CDFs $F_i$ up to the captimes $\kappa_i$, there are some input distributions for which $i^*$ is not $\epsilon$-optimal.
\begin{proof}
    If the skeptic observes $LB_{i^*} \ge UB_i - \epsilon$ for all $i \not= i^*$, then they will know that either $i^* = \iopt$ or that 
    \begin{align*}
        U_{i^*} &\ge LB_{i^*} &&\text{(\cref{lem:true})}\\
        &\ge UB_{\iopt} - \epsilon &&\text{(observed)}\\
        &\ge U_{\iopt} - \epsilon &&\text{(\cref{lem:true})},
    \end{align*}
    so they will be convinced that $i^*$ is $\epsilon$-optimal.

    Let $u^{-1}(x) = \inf\{ t \;:\; u(t) \le x \}$ be the smallest runtime for which the utility falls below $x$. Suppose there is some $i \not= i^*$ for which $UB_i > LB_{i^*} + \epsilon$. Then it may be the case that $i^*$ is not actually $\epsilon$-optimal: Let $\alpha = 2(UB_i - LB_{i^*})$ and observe that $\alpha > 2\epsilon$. Suppose the distribution of $i$ is such that $F_i(t) = F_i(\kappa_i)$ for all $t \in [\kappa_i, A)$ and $F_i(t) = 1$ for $t \ge A$, where $A = 2\kappa_i$ if $\termki{i} \le \frac{\alpha}{4}$ and 
    $A = u^{-1}\Big(u(\kappa_i) - \frac{\alpha}{4\big(1 - F_i(\kappa_i)\big)}\Big)$ otherwise. Then we will have 
    \begin{align*}
        \int_{\kappa_i}^{\infty} u(t) dF_i(t) &= u(A)\big(1-F_i(\kappa_i)\big)\\
        &\ge \termki{i} - \frac{\alpha}{4} .
    \end{align*}
    
    Suppose the distribution of $i^*$ is such that $F_{i^*}(t) = F_{i^*}(\kappa_{i^*})$ for all $t \in [\kappa_{i^*}, B)$ and $F_{i^*}(t) = 1$ for $t \ge B$, where $B = 2\kappa_{i^*}$ if $\termki{i^*} \le \frac{\alpha}{4}$ and $B = u^{-1}\Big(\frac{\alpha}{4\big(1 - F_{i^*}(\kappa_{i^*})\big)}\Big)$ otherwise. Then we will have 
    \begin{align*}
        \int_{\kappa_{i^*}}^{\infty} u(t) dF_{i^*}(t) &= u(B)\big( 1-F_{i^*}(\kappa_{i^*}) \big)\\
        &\le \frac{\alpha}{4} .
    \end{align*}
    
    Thus, with these distributions we will have
    \begin{align*}
        U_i &= UB_i - \termki{i} + \int_{\kappa_i}^{\infty} u(t) dF_i(t) &&\text{(definition of $UB_i$)} \\
        &\ge UB_i - \frac{\alpha}{4} &&\text{(choice of $F_i$)} \\
        &= LB_{i^*} + \frac{\alpha}{4} &&\text{(assumption)} \\
        &= U_{i^*} - \int_{\kappa_{i^*}}^{\infty} u(t) dF_{i^*}(t) +  \frac{\alpha}{4} &&\text{(definition of $LB_{i^*}$)} \\
        &\ge U_{i^*} &&\text{(choice of $F_{i^*}$)} \\
        &> U_{i^*} - \epsilon &&\text{(negative term)}
    \end{align*}
    so $i^*$ is not $\epsilon$-optimal. 
\end{proof}

\paragraph{\cref{lem:sufficientk}:} The skeptic will be convinced that both $\iopt$ and $i^* = \arg\max_i LB_i$ are $\epsilon$-optimal if the captimes $\kappa_i$ are large enough that 
    \begin{kequation}
        \termki{i} \le \Delta_i + \frac{\epsilon}{2}
    \end{kequation}
    for all algorithms $i$. 
\begin{proof}
    Suppose that $\termki{i} \le \Delta_i + \frac{\epsilon}{2}$ for all $i$. Then we will have
    \begin{align*}
        UB_i &= LB_i + \termki{i} &&\text{(definition of $UB_i$)}\\
        &\le LB_i + \Delta_i + \frac{\epsilon}{2} &&\text{($\kappa_i$ large enough)}\\
        &\le U_i + \Delta_i + \frac{\epsilon}{2} &&\text{(\cref{lem:true})}\\
        &= U_\iopt + \frac{\epsilon}{2} &&\text{(definition of $\Delta_i$)}\\
        &\le UB_\iopt + \frac{\epsilon}{2} &&\text{(\cref{lem:true}}\\
        &= LB_\iopt + \termki{\iopt} + \frac{\epsilon}{2} &&\text{(definition of $UB_\iopt$)}\\
        &\le LB_\iopt + \epsilon &&\text{($\kappa_i$ large enough)} \\
        &\le LB_{i^*} + \epsilon &&\text{(choice of $i^*$)} 
    \end{align*}
    so the skeptic can apply \cref{lem:skeptics} and be convinced that $\iopt$ and $i^*$ are $\epsilon$-optimal.
\end{proof} 

\paragraph{\cref{lem:necessaryk}:} There are some inputs for which any verification procedure must use captimes $\kappa_i$ large enough that
    \begin{kequation}
        \termki{i} \le \Delta_i + \epsilon
    \end{kequation}
    for all algorithms $i$, or the skeptic will not be convinced that the returned algorithm is $\epsilon$-optimal. 
\begin{proof}
    Suppose $\termki{i} > \Delta_i + \epsilon$ for some $i$. Let $u^{-1}(x) = \inf\{ t \;:\; u(t) \le x \}$ be the smallest runtime for which the utility falls below $x$. Suppose the distribution of $i$ is such that $F_i(t) = F_i(\kappa_i)$ for $t \in (\kappa_i, A)$ where $A = u^{-1}\Big(\frac{1}{2}\Big( u(\kappa_i) - \frac{\Delta_i + \epsilon}{1-F_i(\kappa_i)} \Big)\Big)$ and $F_i(t) = 1$ for $t \ge A$. Note that $u(A)\big(1-F_i(\kappa_i)\big) < \termki{i} - \Delta_i - \epsilon$. Let $i^*$ be the returned algorithm. We have 
    \begin{align*}
        UB_i &= U_i + \termki{i} - \int_{\kappa_i}^\infty u(t) dF_i(t) &&\text{(definition of $UB_i$)} \\ 
        &= U_i + \termki{i} - u(A)\big(1-F_i(\kappa_i)\big) &&\text{(choice of $F_i$)} \\ 
        &= U_\iopt - \Delta_i + \termki{i} - u(A)\big(1-F_i(\kappa_i)\big) &&\text{(definition of $\Delta_i$)} \\ 
        &> U_\iopt + \epsilon &&\text{(choice of $A$)}\\
        &\ge U_{i^*} + \epsilon &&\text{($\iopt$ is optimal)}\\
        &\ge LB_{i^*} + \epsilon &&\text{(\cref{lem:true})}
    \end{align*}
    so the skeptic is not convinced that $i^*$ is $\epsilon$-optimal.     
\end{proof}

\paragraph{\cref{thm:naive}} With probability at least $1-\delta$ the Naive procedure returns an $\epsilon$-optimal algorithm. It takes enough $\kappa$-capped runtime samples of each algorithm to ensure that
    \begin{kequation}
        2\sqrt{\frac{\ln(2n/\delta)}{2m}} + u(\kappa)\le \epsilon .
    \end{kequation}
\begin{proof}
    The number of samples is immediate from Naive's definition of $m$. \cref{lem:confbound} says that the good events $\big\{ \widehat{U}_i - \frac{\epsilon + u(\kappa)}{2} \le U_i \le \widehat{U}_i + \frac{\epsilon - u(\kappa)}{2} \big\}$ hold simultaneously for all $i$ with probability at least $1 - \delta$:
    \begin{align*}
        \widehat{U}_i - \frac{\epsilon + u(\kappa)}{2} &\le \widehat{U}_i - \sqrt{\frac{\ln(2n/\delta)}{2m}} - u(\kappa) &&\text{(choice of $m$)}\\ 
        &\le U_i &&\text{(\cref{lem:confbound})}\\ 
        &\le \widehat{U}_i + \sqrt{\frac{\ln(2n/\delta)}{2m}} &&\text{(\cref{lem:confbound})}\\ 
        &= \widehat{U}_i + \frac{\epsilon - u(\kappa)}{2} &&\text{(choice of $m$)}.
    \end{align*}
    If the good events hold we will have
    \begin{align*}
        U_{i^*} &\ge \widehat{U}_{i^*} - \frac{\epsilon + u(\kappa)}{2} &&\text{(good events)}\\
        &\ge \widehat{U}_{\iopt} - \frac{\epsilon + u(\kappa)}{2} &&\text{(choice of $i^*$)}\\
        &\ge U_{\iopt} - \epsilon &&\text{(good events)}.
    \end{align*}
    So with probability at least $1-\delta$, the returned algorithm is $\epsilon$-optimal. 
\end{proof}

\paragraph{\cref{thm:up}:} With probability at least $1 - \delta$ UP eventually returns the optimal algorithm and it returns an $\epsilon$-optimal algorithm if it is run until $m$ is large enough that
    \begin{kequation}
        2\sqrt{\frac{\ln(11nm^2(\log\kappa_\iopt+1)^2 / \delta)}{2 m}} + \termki{\iopt} \le \epsilon .
    \end{kequation}
    For any suboptimal $i$, if $m, \kappa_i, \kappa_\iopt$ are ever large enough that $2\sqrt{\frac{\ln(11nm^2(\log\kappa_i+1)^2 / \delta)}{2 m}} + 2\sqrt{\frac{\ln(11nm^2(\log\kappa_\iopt+1)^2 / \delta)}{2 m}} + \termki{i} + \termki{\iopt} \le \Delta_i$
    then $i$ will be eliminated. 

\begin{proof}
    An execution of UP will be called \emph{clean} if for all $i$ and $m$ we have
    \begin{align*}
        LCB_{i m} \le U_i \le UCB_{i m}
    \end{align*}
    and 
    \begin{align*}
        UCB_{i m} - LCB_{i m} \le 2\alpha_{im} + \termki{i} .
    \end{align*}
    
    For any given $i$ and $m$, \cref{lem:confbound} says that the above four inequalities hold simultaneously with probability at least $1 - \frac{\delta}{\frac{11}{4}nm^2(\log \kappa_\iopt + 1)^2}$. Noting that $\sum_{r=1}^{\infty}\frac{1}{r^2} =\frac{\pi^2}{6} < \sqrt{\frac{11}{4}} $, a union bound over all $i$, $m$ and the four inequalities above says that an execution is clean with probability at least $1 - \delta$.     

    At any iteration $m$ during a clean execution we will have
    \begin{align*}
        UCB_{\iopt m} &\ge U_\iopt &&\text{(clean)}\\
        &\ge U_{i^*} &&\text{($\iopt$ is optimal)}\\
        &\ge LCB_{i^* m} &&\text{(clean)}
    \end{align*}
    so $\iopt$ is never eliminated. 
    
    If $i$ is suboptimal, and if $m, \kappa_i, \kappa_\iopt$ are large enough, then we will have
    \begin{align*}
        UCB_{im} &\le LCB_{i m} + 2\alpha_{im} + \termki{i} &&\text{(clean)}\\
        &\le U_i + 2\alpha_{im} + \termki{i} &&\text{(clean)}\\
        &= U_\iopt - \Delta_i + 2\alpha_{im} + \termki{i} &&\text{(definition of $\Delta_i$)}\\
        &< U_\iopt - 2\alpha_{\iopt m} - \termki{\iopt} &&\text{($m,\kappa$ large enough)}\\
        &\le UCB_{\iopt m} - 2\alpha_{\iopt m} - \termki{\iopt} &&\text{(clean)}\\
        &\le LCB_{\iopt m} &&\text{(clean)}\\
        &\le LCB_{i^* m} &&\text{(choice of $i^*$)}
    \end{align*}
    so $i$ will be eliminated. 

    If we reach a point where $2\sqrt{\frac{\ln(11nm^2(\log \kappa_\iopt+1)^2 / \delta)}{2 m}} + \termki{\iopt} \le \epsilon$ then we will have 
    \begin{align*}
        U_{i^*} &\ge LCB_{i^* m} &&\text{(clean)}\\
        &\ge LCB_{\iopt m} &&\text{(choice of $i^*$)}\\
        &\ge UCB_{\iopt m} - 2\alpha_{\iopt m} - \termki{\iopt} &&\text{(clean)}
    \end{align*}
    so $i^*$ is $\epsilon$-optimal. 
\end{proof}

\paragraph{\cref{thm:up2}:} For any $m$, let $\epsilon = 3\sqrt{\frac{\ln(11 n m^4 / \delta)}{2m}}$. Then with probability at least $1 - \delta$, UP returns an $\epsilon$-optimal algorithm once it takes $m$ samples. At that point, if $\kappa_i^*$ is the largest captime algorithm $i$ has been run with then $\kappa_i^* \le 2\inf\big\{\kappa \;:\; \term{i} < \frac{\epsilon}{3\sqrt{2}} \big\}$.
\begin{proof}
    Let $\alpha_m = \sqrt{\frac{\ln(11nm^4 / \delta)}{2m}}$ and note $\alpha_m \le \sqrt{2} \alpha_{im}$ for all i. Also not that since $\kappa_i$ is doubled at most once per increment of $m$, we have that $m \ge \log(\kappa_i) + 1$ and so $\alpha_{im} \le \alpha_m$. 

    The doubling condition means we have had in every iteration $m$ since $\kappa_\iopt$ was last doubled
    \begin{align*} 
        2\alpha_{\iopt m} > \termhatki{\iopt},
    \end{align*}
    and \cref{lem:cdf} says that 
    \begin{align*}
        \termhatki{\iopt} \ge \termki{\iopt} - u(k_\iopt) \alpha_{\iopt m} .
    \end{align*}
    Combining these we have 
    \begin{align*}
        2\alpha_{\iopt m} + \termki{\iopt} &\le \big(2 + u(\kappa_\iopt)\big)\alpha_{\iopt m} .
    \end{align*}
    Since $u(\kappa_\iopt) \le 1$ and $\alpha_{\iopt m} \le \alpha_m$, this means that
    \begin{align*}
        2\alpha_{\iopt m} + \termki{\iopt} &\le 3\alpha_{m} \\
        &= \epsilon
    \end{align*}
    and so by \cref{thm:up}, UP returns an $\epsilon$-optimal algorithm with probability at least $1 - \delta$.

    For any $i$, let $m^*$ be the last round in which we doubled $\kappa_{i}$. This means that in round $m^*$ we had done $m^*$ runs of $i$ at captime $\kappa_{i}^* / 2$, observed that $2 \alpha_{im^*} \le \termhatk{i}{\kappa_i^* / 2}$ and then doubled $\kappa_{i}$ for the last time to $\kappa_{i}^*$.

    Since we observed that $2\alpha_{im^*} \le \termhatk{i}{\kappa_i^* / 2}$, we will have had 
    \begin{align*}
        \termk{i}{\kappa_i^* / 2} &\ge \termhatk{i}{\kappa_i^* / 2} - u(\kappa_i^* / 2)) \alpha_{im^*} &&\text{(\cref{lem:cdf})}\\
        &\ge \big(2 - u(\kappa_i^* / 2) \big) \alpha_{im^*}  &&\text{(observed)}\\
        &\ge \alpha_{im^*}  &&\text{($u$ bounded)}\\
        &\ge \frac{\alpha_{m^*}}{\sqrt{2}}   &&\text{(fact above)}\\
        &= \frac{\epsilon}{3\sqrt{2}} &&\text{(definition)} .
    \end{align*}
    Thus, for every $m > m^*$ we will have $\frac{\epsilon}{3\sqrt{2}} \le \termk{i}{\kappa_i^* / 2}$. Since $\term{i}$ is monotonically decreasing in $\kappa$, this means that $\kappa_{i}^* / 2 \le \inf\big\{\kappa \;:\; \term{i} < \frac{\epsilon}{3\sqrt{2}} \big\}$. 
\end{proof}

\end{document}